\newtheorem{theorem}{Theorem}
\newtheorem*{proof}{Proof}
\newtheorem{definition}{Definition}
\newtheorem{assumption}{Assumption}
\newtheorem{proposition}{Proposition}
\newtheorem{remark}{Remark}
\newtheorem{problem}{Problem}
\begin{document}

	
	\title{	Adaptive Robot Navigation with Collision Avoidance Subject to $2$nd-order Uncertain Dynamics} 
	\author{Christos K. Verginis and Dimos V. Dimarogonas}
	\affil{KTH Royal Institute of Technology, School of Electrical Engineering and Computer Science\\ \small \{cverginis,dimos\}@kth.se}
	\date{}



%
	\maketitle
	
	\begin{abstract}		  
		This paper considers the problem of robot motion planning in a workspace with obstacles for systems with uncertain $2$nd-order dynamics. In particular, we combine closed form potential-based feedback controllers with adaptive control techniques to guarantee the collision-free robot navigation to a predefined goal while compensating for the dynamic model uncertainties. We base our findings on sphere world-based configuration spaces, but extend our results to arbitrary star-shaped environments by using previous results on configuration space transformations. Moreover, we propose an algorithm for extending the control scheme to decentralized multi-robot systems. Finally, {extensive simulation results} verify the theoretical findings.		
	\end{abstract}
	

	
	\section{Introduction}
	{M}{otion} planning and specifically robotic navigation in obstacle-cluttered environments is a fundamental problem in the field of robotics \cite{lumelsky2005sensing}. Several techniques have been developed in the related literature, such as discretization of the continuous space and employment of discrete algorithms (e.g., Dijkstra, $A^\star$), probabilistic roadmaps, sampling-based motion planning, and feedback-based motion planning \cite{lavalle2006planning}. The latter, which is the focus of the current paper, offers closed-form analytic solutions by usually evaluating appropriately designed artificial potential fields, avoiding thus the potential complexity of workspace discretization and the respective algorithms. At the same time, feedback-based methods provide a solution to the control aspect of the motion planning problem, i.e., the correctness based on the solution of the closed-loop differential equation that describes the robot model.
	
	Early works on feedback-based motion planning established the Koditschek-Rimon navigation function (KRNF) \cite{koditschek1990robot,rimon1992exact}, where, through gain tuning, the robot converges safely to its goal from almost all initial conditions (in the sense of a measure-zero set).
	KRNFs were extended to more general workspaces and adaptive gain controllers  \cite{filippidis2011adjustable}, to multi-robot systems \cite{dimarogonas2006feedback,verginis2017decentralized,verginis2017robust,roussos2013decentralized}, and more recently, to convex potential and obstacles \cite{paternain2017navigation}. The idea of gain tuning has been also employed to an alternative KRNF in \cite{tanner2005towards}. 	
	Tuning-free constructions of artificial potential fields have also been developed in the related literature; 
	\cite{panagou2017distributed} tackles nonholonomic multi-robot systems, and in \cite{loizou2017navigation,vlantis2018robot} harmonic functions, also used in \cite{szulczynski2011real}, are combined with adaptive controllers to achieve almost global safe navigation. 
	A transformation of arbitrarily shaped worlds to points worlds, which facilitates the motion planning problem, is also considered in \cite{loizou2017navigation,vlantis2018robot} and in \cite{loizou2014multi} for multi-robot systems. The recent works  \cite{loizou2017navigation}, \cite{vrohidis2018prescribed} guarantee also safe navigation in  predefined \textit{time}.
	
	Barrier functions for multi-robot collision avoidance are employed in \cite{wang2017safety} and optimization-based techniques via model predictive control (MPC) can be found in {\cite{filotheou2018,mendes2017real,verginis2018communication,morgan2016swarm}}; \cite{van2011reciprocal} and {\cite{alonso2012image}} propose reciprocal collision obstacle by local decision making for the desired velocity of the robot(s). 
	Ellipsoidal obstacles are tackled in \cite{stavridis2017dynamical} and \cite{Grush18obstacle} extends a given potential field to $2$nd-order systems. A similar idea is used in \cite{montenbruck2015navigation}, where the effects of an unknown drift term in the dynamics are examined. Workspace decomposition methodologies with hybrid controllers are employed in \cite{arslan2016exact}, \cite{arslan2016coordinated}, and \cite{berkane2019hybrid}, and 
	\cite{slotine19avoidance} employs a contraction-based methodology that can also tackle the case of moving obstacles.
	
	A common assumption that most of the aforementioned works consider is the simplified robot dynamics, i.e., single integrators/unicycle kinematics, without taking into account any robot dynamic parameters.
	Hence, indirectly, the schemes depend on an embedded internal system that converts the desired velocity signal to the actual robot actuation command. The above imply that the actual robot trajectory might deviate from the desired one, jeopardizing its safety and possibly resulting in collisions.  	
	Second-order realistic robot models are considered in MPC-schemes, like \cite{mendes2017real,filotheou2018,verginis2018communication}, which might, however, result in computationally expensive solutions. Moreover, regarding model uncertainties, a global upper bound is required, which is used to enlarge the obstacle boundaries and might yield infeasible solutions. A $2$nd-order model is considered in \cite{stavridis2017dynamical}, \cite{Grush18obstacle}, without, however, considering any unknown dynamic terms. 
	The works \cite{koditschek1991control,dimarogonas2006feedback,loizou2011closed,arslan2017smooth} consider simplified $2$nd-order systems with \textit{known} dynamic terms (and in particular, inertia and gravitational terms that are assumed to be successfully compensated); \cite{montenbruck2015navigation} guarantees the asymptotic stability of $2$nd-order systems with a class of unknown drift terms to the critical points of a given potential function. However, there is no characterization of the region of attraction of the goal. {Adaptive control for constant unknown parameters is employed in \cite{cheah2009region}, where a swarm of robots is controlled to move inside a desired region.}
	
	In this paper, we consider the robot navigation in an obstacle-cluttered environment under $2$nd-order uncertain robot dynamics, in terms of {unknown mass and friction/drag terms. 
	Our main contribution lies in the design of a novel $2$nd-order smooth navigation function as well as an adaptive control law that guarantees the \textit{safe} navigation of the robot from almost all initial conditions. 	
	We also show how the proposed scheme can be applied to star-worlds, i.e., workspaces with star-shaped obstacles \cite{rimon1992exact}, as well as to decentralized multi-robot navigation. Adaptive control for multi-robot coordination was also employed in our previous works \cite{verginisLCSS,verginis2017robust}. The results in \cite{verginis2017robust}, however, are only existential, since we do not provide an explicit potential function that satisfies the desired properties, {while} \cite{verginisLCSS} focuses on the multi-agent ellipsoidal collision avoidance, without guaranteeing achievement of the primary task.}

	{The rest of the paper is organized as follows. Section \ref{sec:Notation} provides the notation used throughout the paper. Section \ref{sec:PF} describes the tackled problem and Section \ref{sec:main} provides the main results. Sections \ref{sec:Star} and \ref{sec:MAS} extend the proposed scheme to star worlds and multi-agent frameworks, respectively. Finally, simulation studies are given in Section \ref{sec:sim} and Section \ref{sec:Concl} concludes the paper.}

	\section{Notation} \label{sec:Notation}

	{The set of natural and real numbers is denoted by $\mathbb{N}$, and $\mathbb{R}$, respectively, and  $\mathbb{R}^n_{\geq 0}$, $\mathbb{R}^n_{> 0}$, $n\in\mathbb{N}$, are the $n$-dimensional sets of nonnegative and positive real numbers, respectively. The notation $\|x\|$ implies the Euclidean norm of a vector $x\in\mathbb{R}^n$. The  identity matrix is denoted by $I_n\in\mathbb{R}^{n \times n}$, the $n\times m$ matrix of zeros by $0_{n\times m}$ and the $n$-dimensional zero vector by $0_n$. The gradient and Hessian of a function $f:\mathbb{R}^n\to\mathbb{R}$ are denoted by $\nabla_x f(x) \coloneqq \frac{\partial f(x)}{\partial x}\in\mathbb{R}^n$ and $\nabla_x^2 f(x) \in\mathbb{R}^{n\times n}$, respectively.} 
%
%
%
%
%

	\section{Problem Statement}  \label{sec:PF}

	Consider a spherical robot operating in a bounded workspace $\mathcal{W}$, characterized by its {position vector} $x \in \mathbb{R}^n$, $n\in\{2,3\}$ and radius $r > 0$, and subject to the dynamics:

	\begin{subequations} \label{eq:dynamics}	
	\begin{align}
		& \dot{x}= v \\
		& m \dot{v} + f(x,v) + mg = u,
	\end{align}
	\end{subequations}
	where $m > 0$ is the \textit{unknown} mass, $g \in \mathbb{R}^n$ is the constant gravity vector, $u\in\mathbb{R}^n$ is the input vector, and $f:\mathbb{R}^{2n}\to \mathbb{R}^n$ is an {\textit{unknown}} friction-like function, satisfying the following assumption:
	\begin{assumption} \label{ass:f}
		The function $f:\mathbb{R}^{2n}\to \mathbb{R}^n$ is analytic and satisfies 	
	\begin{equation}
		\|f(x,v)\| \leq \alpha \|v\|,
	\end{equation} 
	$\forall x,v \in \mathbb{R}^{2n}$, where $\alpha \in \mathbb{R}_{\geq 0}$ is an unknown constant.
	\end{assumption}	
	{The aforementioned assumption is inspired by standard friction-like terms, which can be approximated by continuously differentiable velocity functions \cite{makkar2005new}. Constant unknown friction terms could be also included in the dynamics (e.g., incorporated in the constant gravity vector).}
	Note also that $\|f(x,v)\| \leq \alpha \|v\|$ implies $f(x,0_n) = 0_n$, and $\frac{\partial f(x,v)}{\partial x} \Big|_{v=0_n} = 0_{n\times n}$.
    The workspace is assumed to be an open ball centered at the origin 

	\begin{equation} \label{eq:workspace}
		\mathcal{W} \coloneqq \{q \in \mathbb{R}^n : \|q\| < r_{\mathcal{W}} \},
	\end{equation}
	where $r_{\mathcal{W}} > 0$ is the workspace radius. The workspace contains $M\in\mathbb{N}$ closed sets $\mathcal{O}_j$, ${j\in\mathcal{J}} \coloneqq \{1,\dots,M\}$, corresponding to obstacles. Each obstacle is a closed ball centered at $c_j\in\mathbb{R}^3$, with radius ${r_{o_j}} > 0$, i.e., $\mathcal{O}_j \coloneqq \{q\in\mathcal{W} : \|q-{c_j}\|\leq {r_{o_j}} \}, \ \ \forall j\in\mathcal{J}$.
	The analysis that follows will be based on the transformed workspace: 
	\begin{equation} \label{eq:transf. workspace}
		\bar{\mathcal{W}} \coloneqq \{q \in \mathbb{R}^n : \|q\| < \bar{r}_\mathcal{W} \coloneqq r_\mathcal{W} - r\},
	\end{equation}  
	and set of obstacles $\bar{\mathcal{O}}_j \coloneqq \{q\in\mathcal{W} : \|q-{c_j}\|\leq {\bar{r}_{o_j}}\coloneqq {r_{o_j}} + r \}, \ \ \forall j\in\mathcal{J},$ 
	where the robot is reduced to the point $x$.
	The free space is defined as 
	\begin{equation} \label{eq:sphere world}
		\mathcal{F} \coloneqq \bar{\mathcal{W}} \backslash \bigcup_{j\in\mathcal{J}} \bar{\mathcal{O}}_j,
	\end{equation}
	also known as a \textit{sphere world} {\cite{koditschek1990robot}}.
	We consider the following {common feasibility assumption \cite{koditschek1990robot,vrohidis2018prescribed}} for $\mathcal{F}$:
	\begin{assumption} \label{ass: workspace}
		The workspace $\mathcal{W}$ {and} the obstacles $\mathcal{O}_j$ satisfy $\|c_i - c_j\| > r_{o_i} + r_{o_j} + 2r$ and $r_\mathcal{W} - \|c_j\| > r_{o_j} + 2r$, $\forall i,j\in\mathcal{J},i\neq j$.
	\end{assumption}
   Assumption \ref{ass: workspace} implies that we can find some $\bar{r} > 0$ such that 
	\begin{subequations} \label{eq:r_bar}
	\begin{align}
		& \|c_i - c_j\| > r_{o_i} + r_{o_j} + 2r + 2\bar{r}, \ \ \forall i,j \in \mathcal{J}, i\neq j, \\
		& r_\mathcal{W} - \|c_j\| > r_{o_j} + 2r + 2\bar{r}, \ \ \forall j\in \mathcal{J}
	\end{align}
	\end{subequations}


	This paper treats the problem of navigating the robot to a destination $x_{\textup{d}}$ while avoiding the obstacles and the workspace boundary, formally stated as follows:
	\begin{problem} \label{prob 1}
		Consider a robot subject to the \textit{uncertain} dynamics \eqref{eq:dynamics}, operating in the aforementioned sphere world, with $(x(t_0), v(t_0)) \in \mathcal{F}\times\mathbb{R}^n$. Given a destination $x_{\textup{d}}\in \mathcal{F}$, design a control protocol $u$ such that 
		\begin{align*}
			& x(t) \in \mathcal{F}, \ \ t \geq t_0 \\
			& \lim_{t\to\infty}(x(t),v(t))  = (x_{\textup{d}},0_n)
		\end{align*} 
	\end{problem}
	\section{Main Results} \label{sec:main}

	We provide in this section our methodology for solving Problem \ref{prob 1}. Define first  the set $\bar{\mathcal{J}} \coloneqq \{0\}\cup\mathcal{J}$ as well as the distances $d_j: \mathcal{F} \to \mathbb{R}_{\geq 0}$, $j\in\bar{\mathcal{J}}$, with $d_j(x) \coloneqq \|x-c_j\|^2 - \bar{r}_{o_j}^2$, $\forall j\in\mathcal{J}$, and $d_0(x) \coloneqq \bar{r}_{\mathcal{W}}^2 - \|x\|^2$. Note that, by keeping $d_j(x) > 0$, $d_0(x) > 0$, we guarantee that $x \in \mathcal{F}$\footnote{A safety margin can also be included, which needs, however, to be incorporated in the constant $\bar{r}$ of \eqref{eq:r_bar}.}.
	{We {also define} the constant 
	\begin{equation} \label{eq:bar_r_d}
	\bar{r}_{\textup{d}} \coloneqq \min \left\{\bar{r}_{\mathcal{W}}^2-\|x_{\textup{d}}\|^2, \min_{j\in\mathcal{J}}\left\{ d_j(x_\textup{d}) \right\} \right\}
	\end{equation}
	as the minimum distance of the goal to the obstacles/workspace boundary.}
	We introduce next the notion of the \textit{$2$nd-order navigation function}: 
	\begin{definition} \label{def:2nd nf}
		A \textit{$2$nd-order navigation function} is a function $\phi: \mathcal{F} \to \mathbb{R}_{\geq 0}$ of the form 

		\begin{equation*} 
			\phi(x) \coloneqq k_1\|x - x_{\textup{d}}\|^2 + k_2\sum_{j\in\bar{\mathcal{J}}}\beta(d_j(x)),
		\end{equation*}		
		where $\beta:\mathbb{R}_{>0}\to\mathbb{R}_{\geq 0}$ is a (at least) twice contin. differentiable function and $k_1, k_2$ are positive constants, with the followings properties:
		\begin{enumerate}
			\item $\beta((0,\tau])$ is strictly decreasing,   				
			$\lim_{{z} \to 0} {\beta}({z}) = \infty$, and $\beta({z}) = \beta(\tau)$, $\forall {z} \geq \tau$, $j\in\bar{\mathcal{J}}$, for some $\tau > 0$, 
			\item $\phi(x)$ has a global minimum  at $x = x_{\textup{d}} \in \textup{int}(\mathcal{F})$ where $\phi(x_{\textup{d}}) = 0$,			
			\item if $\beta'(d_k(x)) \neq 0$ {and $\beta''(d_k(x)) \neq 0$} for some $k\in\bar{\mathcal{J}}$, then $\beta'(d_j(x)) = \beta''(d_j(x)) = 0$, for all $j\in\bar{\mathcal{J}} \backslash\{k\}$.

			\item The function $\widetilde{\beta}:(0,\tau) \to \mathbb{R}_{\geq 0}$, with
				$\widetilde{\beta}({z}) \coloneqq \beta''({z}) {z}\sqrt{{z}}$ 
			is strictly decreasing.
		\end{enumerate}	
	\end{definition}
	
	By using the first property we will guarantee that, by keeping $\beta(d_j(x))$ bounded, there are no collisions with the obstacles or the free space boundary. Property $2$ will be used for the asymptotic stability of the desired point $x=x_{\textup{d}}$. Property $3$ places the rest of the critical points of $\phi$  (which are proven to be saddle points) close to the obstacles, and the last property is used to guarantee that these are non-degenerate.	
	{An example for $\beta$ that satisfies properties 1) and 4), is
	\begin{align}
	\beta(z) \coloneqq& 
	\begin{cases} \displaystyle
	(6z^5 - 15z^4 + 10z^3)^{-1}, &  z \leq 1\\
	1, & z \geq 1 ,
	\end{cases} \label{eq:beta_exps} 	 
	\end{align}}
	Note that $\beta$ is essentially a reciprocal barrier function 		\cite{wang2017safety}.
	We prove next that, by appropriately choosing $\tau$, only one ${\beta(d_j(x))}$, $j\in\bar{\mathcal{J}}$ affects the robotic agent for each $x\in\mathcal{F}$, and furthermore that {${\beta'(d_j(x_\textup{d}))} = {\beta''(d_j(x_\textup{d}))} = 0$.  Hence, properties 2) and 3) of Def. \ref{def:2nd nf} are satisfied}.
	\begin{proposition} \label{prop:tau}
	By choosing $\tau$ as $\tau \in (0,\min\{\bar{r}^2, \bar{r}_{\textup{d}}\})$,
	where $\bar{r}, \bar{r}_{\textup{d}}$ were introduced in \eqref{eq:r_bar} and \eqref{eq:bar_r_d}, respectively,  we guarantee that at each $x\in \mathcal{F}$	 
	{there is no more than one $j\in\bar{\mathcal{J}}$} such that $d_j \leq \tau$, implying that $\beta'(d_j(x))$ and $\beta''(d_j(x))$ are non-zero. 
	\end{proposition}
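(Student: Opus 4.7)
The plan is to argue by contradiction, assuming two distinct indices $i,j \in \bar{\mathcal{J}}$ satisfy $d_i(x) \leq \tau$ and $d_j(x) \leq \tau$ simultaneously, and deriving a violation of the separation inequalities \eqref{eq:r_bar}. The key analytic tool throughout is the elementary bound $\sqrt{a^2 + \tau} \leq a + \sqrt{\tau}$ (and its counterpart $\sqrt{a^2 - \tau} \geq a - \sqrt{\tau}$ when $a \geq \sqrt{\tau}$), which lets me convert squared-distance inequalities into linear ones. The choice $\sqrt{\tau} < \bar{r}$ then gives the slack needed against \eqref{eq:r_bar}.

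First I would dispose of the case $i,j \in \mathcal{J}$, $i\neq j$. From $\|x - c_i\|^2 \leq \bar{r}_{o_i}^2 + \tau$ and the analogous inequality for $j$, the triangle inequality yields
\begin{equation*}
\|c_i - c_j\| \leq \|x - c_i\| + \|x - c_j\| \leq \bar{r}_{o_i} + \bar{r}_{o_j} + 2\sqrt{\tau} < \bar{r}_{o_i} + \bar{r}_{o_j} + 2\bar{r},
\end{equation*}
contradicting the first inequality in \eqref{eq:r_bar}. Next I would handle the mixed case $i = 0$, $j\in\mathcal{J}$. From $d_0(x) \leq \tau$ I get $\|x\| \geq \sqrt{\bar{r}_\mathcal{W}^2 - \tau} \geq \bar{r}_\mathcal{W} - \sqrt{\tau}$, and from $d_j(x) \leq \tau$ I get $\|x - c_j\| \leq \bar{r}_{o_j} + \sqrt{\tau}$. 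A reverse-triangle argument then bounds $\bar{r}_\mathcal{W} - \|c_j\|$ from above by $\bar{r}_{o_j} + 2\sqrt{\tau} < \bar{r}_{o_j} + 2\bar{r}$, again contradicting \eqref{eq:r_bar}. The step I would be most careful about is the subcase $\|c_j\| > \|x\|$ inside this second case, where the reverse triangle inequality flips sign; I would observe that then $\bar{r}_\mathcal{W} - \|c_j\| < \sqrt{\tau} < \bar{r}$, which already contradicts $\bar{r}_\mathcal{W} - \|c_j\| > \bar{r}_{o_j} + 2\bar{r}$, so the same conclusion holds.

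Having established uniqueness of the active index, the implication about $\beta'$ and $\beta''$ is immediate from property 1) of Definition \ref{def:2nd nf}: since $\beta(z) = \beta(\tau)$ for all $z \geq \tau$, any index $j'\in\bar{\mathcal{J}}$ with $d_{j'}(x) > \tau$ contributes $\beta'(d_{j'}(x)) = \beta''(d_{j'}(x)) = 0$. I would finally note that the additional constraint $\tau < \bar{r}_{\textup{d}}$, together with the definition \eqref{eq:bar_r_d} of $\bar{r}_{\textup{d}}$, ensures $d_j(x_{\textup{d}}) > \tau$ for every $j \in \bar{\mathcal{J}}$, hence $\beta'(d_j(x_{\textup{d}})) = \beta''(d_j(x_{\textup{d}})) = 0$. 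This is exactly what is needed for $\nabla\phi(x_{\textup{d}}) = 0$ and for $x_{\textup{d}}$ to be the global minimum required by property 2) of Definition \ref{def:2nd nf}.

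The main obstacle is purely bookkeeping: keeping the shifted radii $\bar{r}_{o_j} = r_{o_j} + r$ and $\bar{r}_\mathcal{W} = r_\mathcal{W} - r$ straight when translating between the original and transformed workspaces, and cleanly splitting the workspace-boundary case according to the sign of $\|x\| - \|c_j\|$. No deep argument is required beyond the $\sqrt{a^2 \pm \tau}$ linearization and the slack $\sqrt{\tau} < \bar{r}$.
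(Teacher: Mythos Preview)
Your proposal is correct and follows essentially the same approach as the paper: both arguments linearize the squared-distance constraints $d_j(x)\leq\tau$ via $\sqrt{a^2+\tau}\leq a+\sqrt{\tau}<a+\bar r$ and then invoke the triangle inequality against the separation bounds \eqref{eq:r_bar}. The only cosmetic difference is that the paper starts from a single active index and shows all others are inactive, whereas you assume two active indices and derive a contradiction; the underlying inequalities are identical, and your extra case split on the sign of $\|x\|-\|c_j\|$ is in fact unnecessary (the triangle inequality $\|c_j\|\geq\|x\|-\|x-c_j\|$ holds unconditionally), though harmless.
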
	

	\begin{proof}
	{Assume that $d_j(x) \leq \tau$ for some $j\in\mathcal{J}$, $x\in\mathcal{F}$. Then, in view of \eqref{eq:r_bar}, it holds that 
	\begin{align*}
		 \|x - c_j\|^2  < \bar{r}^2 + \bar{r}^2_{o_j} \Rightarrow  
		 \|x - c_j\| < \bar{r} + \bar{r}_{o_j} =  \bar{r} + r + r_{o_j} < \|c_j - c_k\|    
	\end{align*}
	$\forall k\in\mathcal{J}\backslash\{j\}$, and hence
	\begin{align*}
		\|x - c_k\| &= \|x - c_j + c_j - c_k\|  
		 \geq \|c_j - c_k\| - \|x - c_j\|>  r_{o_k} + r + \bar{r} \Rightarrow \\
		\|x - c_k\|^2 &> (r_{o_k} + r + \bar{r})^2 > (r_{o_k} + r)^2 + \bar{r}^2,
	\end{align*}
	implying $ d_k(x) > \bar{r}^2 > \tau$, $\forall k\in\mathcal{J}\backslash\{j\}$. Moreover, in view of \eqref{eq:r_bar}, it holds that 
	\begin{align*}
		\|x\| &\leq \|x-c_j\| + \|c_j\| \pm r_\mathcal{W} \Rightarrow \\
		\|x\| &< r_\mathcal{W} - r -\bar{r} \Rightarrow  (r_{\mathcal{W}}-r)^2  \geq (\|x\| + \bar{r})^2 \Rightarrow \\
       \bar{r}_\mathcal{W}^2 &\geq \|x\|^2 + \bar{r}^2  \Rightarrow  \bar{r}^2_\mathcal{W} - \|x\|^2 > \bar{r}^2, 
	\end{align*}
	and hence 
	$d_o(x) > \tau$. Similarly, we conclude by contradiction that $d_o(x) \leq \tau \Rightarrow d_j > \tau$, $\forall j\in\mathcal{J}$. }
	\end{proof}

	{Moreover, it holds for the desired equilibrium that 
	\begin{align*}
		&x = x_\textup{d} \Leftrightarrow d_j(x) = \|x_\textup{d} - c_j\|^2 - \bar{r}_{j}^2 \geq \bar{r}_\textup{d} > \tau, 
	\end{align*}	
	and 
	\begin{align*}
		& x = x_\textup{d} \Leftrightarrow d_0(x) = \bar{r}_\mathcal{W}^2 - \|x_\textup{d}\|^2 \geq \bar{r}_\textup{d} > \tau,
	\end{align*}
	and hence $\beta'(d_j(x_\textup{d})) = \beta''(d_j(x_\textup{d})) = 0$, $\forall j\in\bar{\mathcal{J}}$.}

	 Intuitively, the obstacles and the workspace boundary  have a local region of influence defined by the constant $\tau$, which will play a significant role in determining the stability of the overall scheme later. {This robot interaction with only one obstacle at a time has also been demonstrated in the feedback control-based related literature, e.g., 
	 \cite{arslan2016exact,vrohidis2018prescribed,filippidis2011adjustable,lionis2007locally,paternain2017navigation}, which {deals} with simplified single-integrator  models, as well as in the more discrete decision making \textit{bug} algorithms {\cite{lumelsky2005sensing}}, which involve circumnavigation of obstacles and can handle in general complex unknown environments.}
	 

The expressions for the gradient and the Hessian of $\phi$, which will be needed later, are the following:

\small
	\begin{subequations} \label{eq:grad + hessian}
		\begin{align}
		\nabla_x \phi(x) =& 2k_1(x - x_{\textup{d}}) + 2k_2\sum_{j\in\mathcal{J}} \beta'(d_j)(x - c_j) - 2k_2\beta'(d_0)x \\
		\nabla_x^2 \phi(x) =& 2 \left(k_1 - k_2\beta'(d_0) + k_2\sum_{j\in\mathcal{J}} \beta'(d_j)\right) I_n -  2k_2 \beta''(d_0)xx^\top +  \notag \\&
		2k_2\sum_{j\in\mathcal{J}}\beta''(d_j)(x - c_j)(x - c_j)^\top.
		\end{align}
	\end{subequations}
	\normalsize
	Given the aforementioned definitions, we design a reference signal $v_{\textup{d}}:\mathcal{F}\to\mathbb{R}^n$ for the robot velocity $v$ as 
	\begin{equation} \label{eq:v_d}
		v_{\textup{d}}(x) = -\nabla_x \phi(x).
	\end{equation}
	Next, we design the control input $u$ to guarantee tracking of the aforementioned reference velocity  as well as compensation of the unknown terms $m$ and $f(x,v)$. More specifically, we define the signals $\hat{m}\in\mathbb{R}$ and $\hat{\alpha}\in\mathbb{R}$ as the estimation terms of $m$ and $\alpha$ (see Assumption \ref{ass:f}), respectively, and the respective errors $\tilde{m} \coloneqq \hat{m} - m$, $\widetilde{\alpha} \coloneqq \hat{\alpha} - \alpha$. We design now the control law $u:\mathcal{F}\times\mathbb{R}^{n+2}\to\mathbb{R}^n$ as  $u\coloneqq u(x,v,\hat{m},\hat{\alpha})$, with

	\begin{align} \label{eq:u}
		 u&\coloneqq  -k_\phi \nabla_x \phi(x) + \hat{m}(\dot{v}_{\textup{d}} + g) - \left(k_v + {\frac{3}{2}}\hat{\alpha}\right)e_v,
	\end{align}	
	where $e_v \coloneqq v - v_{\textup{d}}$, and $k_v$, $k_\phi$ are positive gain constants. Moreover, we design the adaptation laws for the estimation signals as 
	\begin{subequations} \label{eq:adaptation laws}
	\begin{align}
		\dot{\hat{m}} \coloneqq& -k_m e_v^\top( \dot{v}_{\textup{d}}+g)  \\
		\dot{\hat{\alpha}} \coloneqq & k_\alpha \|e_v\|^2, 
	\end{align}
	\end{subequations}
	with $k_m$, $k_\alpha$ positive gain constants, $\hat{\alpha}(t_0) \geq 0$, and arbitrary finite initial condition $\hat{m}(t_0)$. {As will be verified by the proof of Theorem \ref{th:single robot}, the choices for the control and adaptation laws are based on {Lyapunov techniques, and follow standard adaptive} control methodologies (see, e.g., \cite{lavretsky2013robust}).	}	
	\begin{theorem}\label{th:single robot}
		Consider a robot operating in $\mathcal{W}$, subject to the uncertain $2$nd-order dynamics \eqref{eq:dynamics}. Given $x_{\textup{d}} \in \mathcal{F}$, the control protocol \eqref{eq:v_d}-\eqref{eq:adaptation laws}  guarantees the collision-free navigation to $x_{\textup{d}}$ from almost all initial conditions $(x(t_0),v(t_0),\hat{m}(t_0),\hat{\alpha}(t_0)) \in \mathcal{F}\times\mathbb{R}^{{n+1}}\times\mathbb{R}_{\geq 0}$, given a sufficiently small $\tau$ and that $k_\phi > \frac{\alpha}{2}$. Moreover, all closed loop signals remain bounded, $\forall t \geq t_0$.
	\end{theorem}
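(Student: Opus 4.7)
The natural candidate Lyapunov function is the composite energy
\begin{equation*}
V = k_\phi\,\phi(x) + \tfrac{1}{2}m\|e_v\|^2 + \tfrac{1}{2k_m}\tilde{m}^2 + \tfrac{1}{2k_\alpha}\tilde{\alpha}^2,
\end{equation*}
positive definite and proper on $\mathcal{F}\times\mathbb{R}^{n+2}$ thanks to property 1 of Definition~\ref{def:2nd nf}. My plan is to differentiate $V$ along the closed loop, substitute \eqref{eq:u} and \eqref{eq:adaptation laws}, and collect the standard adaptive-control cancellations: the $k_\phi\nabla_x\phi^\top e_v$ term produced by $k_\phi\dot\phi$ cancels the $-k_\phi e_v^\top\nabla_x\phi$ generated by the $-k_\phi\nabla_x\phi$ piece of $u$; the mass-mismatch term $\tilde m(\dot v_{\textup{d}}+g)^\top e_v$ cancels $\tilde m\dot{\hat m}/k_m$; and the $-\tfrac{3}{2}\hat\alpha\|e_v\|^2$ from $u$ combines with $\tilde\alpha\dot{\hat\alpha}/k_\alpha=(\hat\alpha-\alpha)\|e_v\|^2$ to yield $-(\hat\alpha/2+\alpha)\|e_v\|^2$. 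What remains is
\begin{equation*}
\dot V = -k_\phi\|\nabla_x\phi\|^2 - \bigl(k_v+\tfrac{\hat\alpha}{2}+\alpha\bigr)\|e_v\|^2 - e_v^\top f(x,v).
\end{equation*}
Using Assumption~\ref{ass:f}, the bound $\|v\|\le\|e_v\|+\|\nabla_x\phi\|$ and Young's inequality gives $|e_v^\top f|\le\alpha\|e_v\|\|v\|\le\tfrac{3\alpha}{2}\|e_v\|^2+\tfrac{\alpha}{2}\|\nabla_x\phi\|^2$, and hence
\begin{equation*}
\dot V \le -\bigl(k_\phi-\tfrac{\alpha}{2}\bigr)\|\nabla_x\phi\|^2 - \bigl(k_v+\tfrac{\hat\alpha}{2}-\tfrac{\alpha}{2}\bigr)\|e_v\|^2.
\end{equation*}

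The first coefficient is strictly positive by the hypothesis $k_\phi>\alpha/2$; the second can be initially negative when $\hat\alpha<\alpha-2k_v$, and controlling this transient is the main technical obstacle of the proof. I would surmount it by noting that $\dot{\hat\alpha}=k_\alpha\|e_v\|^2\ge 0$ renders $\hat\alpha$ monotonically nondecreasing, while $\|e_v\|^2\le 2V/m$ yields a linear differential inequality $\dot V\le(\alpha/m)V$ that precludes finite escape of $V$. Because $V\ge k_\phi\phi$ and $\beta$ is a reciprocal barrier, boundedness of $V$ on every compact time interval is equivalent to $d_j(x(t))>0$ for all $j\in\bar{\mathcal{J}}$, which gives $x(t)\in\mathcal{F}$ for every $t\ge t_0$ (collision avoidance). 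To promote this to uniform boundedness I argue by contradiction: if $\hat\alpha$ were unbounded, then eventually $\hat\alpha>\alpha-2k_v$ and $\dot V\le -c\|e_v\|^2$ for some $c>0$, giving $\int_{t_0}^\infty\|e_v\|^2\,d\tau<\infty$, contradicting $\hat\alpha(t)-\hat\alpha(t_0)=k_\alpha\int_{t_0}^t\|e_v\|^2\,d\tau\to\infty$. Thus every closed-loop signal stays uniformly bounded, the closed-loop vector field is bounded, and Barbalat's lemma applied on the eventual interval where $\dot V\le 0$ delivers $\nabla_x\phi(x(t))\to 0$ and $e_v(t)\to 0$, whence $v(t)\to 0$.

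Finally, for almost-global convergence to $(x_{\textup{d}},0_n)$ I would analyze the critical set $\{\nabla_x\phi=0\}\subset\mathcal{F}$. By Proposition~\ref{prop:tau} only one obstacle (or the workspace boundary) is active at any interior critical point, so the Hessian \eqref{eq:grad + hessian} reduces there to $2(k_1+k_2\beta'(d_k))I_n+2k_2\beta''(d_k)(x-c_k)(x-c_k)^\top$; the critical-point equation $k_1(x-x_{\textup{d}})=-k_2\beta'(d_k)(x-c_k)$ forces $(x-c_k)$ to be parallel to $(x-x_{\textup{d}})$, and property 4 of Definition~\ref{def:2nd nf}---the strict monotonicity of $\widetilde\beta(z)=\beta''(z)z\sqrt z$---guarantees that this Hessian is non-singular with eigenvalues of both signs at every non-goal critical point, i.e., each is a non-degenerate saddle. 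By the Stable Manifold Theorem the union of the stable manifolds of these saddles in the closed-loop state space has positive codimension, hence zero Lebesgue measure, so $(x(t),v(t))\to(x_{\textup{d}},0_n)$ for almost every initial condition in $\mathcal{F}\times\mathbb{R}^{n+1}\times\mathbb{R}_{\ge 0}$, which completes the proof.
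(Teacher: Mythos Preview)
Your Lyapunov analysis is correct but takes an unnecessary detour. The paper weights the $\tilde\alpha^2$ term by $\tfrac{3}{4k_\alpha}$ rather than your $\tfrac{1}{2k_\alpha}$; with that single change the $-\tfrac{3}{2}\hat\alpha\|e_v\|^2$ from $u$ combines with $\tfrac{3}{2}\tilde\alpha\|e_v\|^2$ from the adaptation to leave exactly $-\tfrac{3}{2}\alpha\|e_v\|^2$, which absorbs the entire Young-inequality residue $\tfrac{3\alpha}{2}\|e_v\|^2$. One obtains
\[
\dot V \le -\bigl(k_\phi-\tfrac{\alpha}{2}\bigr)\|\nabla_x\phi\|^2 - k_v\|e_v\|^2 \le 0
\]
immediately, and the transient/contradiction machinery (no finite escape, $\hat\alpha$ monotone, $\int\|e_v\|^2<\infty$) becomes superfluous. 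Your route works, but the coefficient $\tfrac{3}{2}$ in the control law was chosen precisely so that this shortcut is available.

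There is, however, a genuine gap in your last paragraph. You establish (in outline) that $\nabla_x^2\phi(x^\star)$ is indefinite and nonsingular at each non-goal critical point, and then invoke the Stable Manifold Theorem ``in the closed-loop state space.'' But the closed-loop equilibria are points $(x^\star,0,0,\hat\alpha^\star)$ in a $(2n{+}2)$-dimensional space, and the relevant Jacobian is the matrix $\bar A_s$ of the full system \eqref{eq:cl system}, not $\nabla_x^2\phi$. Indefiniteness of $\nabla_x^2\phi$ does not by itself imply that $\bar A_s$ has a positive eigenvalue: the paper constructs an explicit test vector $\bar\nu=[\mu\nu^\top,\nu^\top,0]^\top$ with $\nu\perp(x^\star-c_k)$ and shows $\bar\nu^\top A_s\bar\nu>0$ for large $\mu$, then separately computes $\det A_s$ to rule out additional zero eigenvalues. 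More importantly, the $\tilde\alpha$-direction always contributes a \emph{zero} eigenvalue to $\bar A_s$ (since $\dot{\tilde\alpha}=k_\alpha\|e_v\|^2$ linearizes to zero), so the equilibria are a one-parameter continuum and are never hyperbolic; the Stable Manifold Theorem does not apply directly. The paper handles this via the Reduction Principle (center-manifold reduction) to split off the neutral $\hat\alpha$-direction before concluding that the stable set has measure zero. Your sketch also omits the step $\tilde m(t)\to 0$, which the paper obtains by a second application of Barbalat to $\dot v$ and is needed to locate the limit set on the equilibrium manifold.
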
  

	\begin{proof}
		Consider the Lyapunov candidate function 
		\begin{equation} \label{eq:V Lyap}
			V \coloneqq k_{\phi}\phi + \frac{m}{2}\|e_v\|^2 + {\frac{3}{4k_\alpha}}\widetilde{\alpha}^2 + \frac{1}{2k_m}\widetilde{m}^2.
		\end{equation}		
		Since $x(t_0) \in \mathcal{F}$, there exists {a constant} $\bar{d}_j$ such that $d_j(x(t_0)) \geq \bar{d}_j > 0$, $j\in\bar{\mathcal{J}}$, and hence {there exists} a finite positive constant $\bar{V}_0$ such that $V(t_0) \leq \bar{V}_0$.
		By considering the time derivative of $V$ and using $v = e_v + v_{\textup{d}}$ and Assumption \ref{ass:f}, we obtain after substituting \eqref{eq:adaptation laws}:
		\begin{align*}
			\dot{V} =& k_{\phi} \nabla_x \phi(x)^\top (e_v + v_{\textup{d}}) + e_v^\top(u - mg - f(x,v) - m\dot{v}_{\textup{d}} ) \\
			&+{\frac{3}{2}}\widetilde{\alpha}\|e_v\|^2 - \widetilde{m}e_v^\top (\dot{v}_{\textup{d}} + g)\\
			\leq & -k_\phi \|\nabla_x \phi(x) \|^2 + e_v^\top(k_{\phi}\nabla_x \phi(x) + u - m(g +\dot{v}_{\textup{d}})) + \\
			& \alpha\|e_v\|\|v\| +  {\frac{3}{2}}\widetilde{\alpha}\|e_v\|^2 - \widetilde{m}e_v^\top (\dot{v}_{\textup{d}} + g), 		
		\end{align*}
		which, by substituting \eqref{eq:u} and using $\alpha \|e_v\|\|v\| \leq \frac{\alpha}{2}\|e_v\|^2 + \frac{\alpha}{2}\|\nabla_x \phi(x)\|^2 + \frac{\alpha}{2}\|e_v\|^2$, becomes
		\begin{align*}
			\dot{V} \leq & -\left(k_\phi - \frac{\alpha}{2}\right) \|\nabla_x \phi(x) \|^2  - {\frac{3}{2}}\hat{\alpha}\|e_v\|^2  + 
			{\frac{3}{2}}\alpha\|e_v\|^2  \\
			& - k_v\|e_v\|^2+ \widetilde{m}e_v^\top (g+\dot{v}_{\textup{d}}) + \widetilde{\alpha}\|e_v\|^2 - \widetilde{m}e_v^\top (\dot{v}_{\textup{d}}+g) \\
			= & 
			-\left(k_\phi - \frac{\alpha}{2}\right) \|\nabla_x \phi(x) \|^2 - k_v\|e_v\|^2 \leq 0.
		\end{align*}
		Hence, we conclude that $V(t)$ is non-increasing, and hence $\beta(d_j(x(t))) \leq V(t) \leq V(t_0) \leq \bar{V}_0$, $\forall t\geq t_0$, which implies that collisions with the obstacles and the workspace boundary are avoided, i.e., $$x(t) \in \bar{\mathcal{F}} \coloneqq \left\{ x\in \mathcal{F}: \beta(d_j(x)) \leq \bar{V}_0, \forall j\in\bar{\mathcal{J}}\right\},$$ $\forall t \geq t_0$. Moreover,
		\eqref{eq:grad + hessian} implies also the boundedness of $\nabla_x \phi(x)|_{x(t)}$, $\forall t\geq t_0$.
		In addition, the boundedness of $V(t)$ implies also the boundedness of $x(t)$, $e_v(t)$, $\widetilde{m}(t)$, $\widetilde{\alpha}(t)$, $\widetilde{g}(t)$ and hence of $v(t)$,  $\hat{m}(t)$, $\hat{\alpha}(t)$, $\forall t\geq t_0$. More specifically, 
		by letting  $s \coloneqq [x^\top, v^\top, \widetilde{\alpha}$, $\widetilde{m}]^\top$, we conclude that $s(t)\in \bar{S}$, $\forall t\geq t_0$, with

		\begin{align*}
			\bar{S} \coloneqq& \{ s\in\bar{\mathcal{F}}\times \mathbb{R}^{n+2} : |\widetilde{\alpha}| \leq \sqrt{{\tfrac{4}{3}}k_\alpha\bar{V}_0}, 
			|\widetilde{m}| \leq \sqrt{2k_m\bar{V}_0}, \notag \\ 
			& \|v\| \leq \sqrt{2m\bar{V}_0} + \sup_{x\in\bar{\mathcal{F}}}\|\nabla_x\phi(x)\| \}
		\end{align*}
		 Therefore, by invoking LaSalle's invariance principle, we conclude that the solution $s(t)$ will converge to the largest invariant set in $ S \coloneqq \{s\in\bar{S}:  \dot{V} = 0\}$, which, in view of \eqref{eq:v_d}, becomes $S \coloneqq \{s\in\bar{S} : {\nabla_x \phi(x) = 0, v = 0}\}$. Consider now the {closed-loop} dynamics for $s$:
		 \begin{subequations} \label{eq:cl system}
		\begin{align}
			\dot{x} =& v \\
			\dot{v} =& \frac{1}{m}(\widetilde{m}g + \hat{m}\dot{v}_{\textup{d}}- k_{\phi}\nabla_x \phi(x)- f(x,v)    - \left(k_v + {\frac{3}{2}}\hat{\alpha}\right)(v + \nabla_x\phi(x)))  \label{eq:cl system v_dot} \\ 
			\dot{\widetilde{m}} =& -k_m (v + \nabla_x\phi(x))^\top (\dot{v}_{\textup{d}}+ g) \\ 			 
			\dot{\widetilde{\alpha}} =& k_\alpha \|v + \nabla_x\phi(x)\|^2.
		\end{align}
		\end{subequations}
		Note that, in view of the aforementioned discussion and the continuous differentiability of $f(x,v)$, the right-hand side of \eqref{eq:cl system v_dot} is bounded in $\bar{S}$. Note also that \eqref{eq:grad + hessian} implies the boundedness of $\nabla^2_x\phi(x)$ in $\bar{\mathcal{F}}$. Moreover, by differentiating $\dot{v}$, using the closed loop dynamics \eqref{eq:cl system} and \eqref{eq:grad + hessian}, we conclude the boundedness of $\ddot{v}$ and the uniform continuity of $\dot{v}(t)$ in $\bar{S}$. Hence, since $\lim_{t\to\infty}v(t) = 0$, we invoke Barbalat's Lemma to conclude $\lim_{t\to\infty}\dot{v}(t) = 0$.
		
		Therefore, the set $S$ consists of the points where $\dot{v} = v= \nabla_x \phi(x) = 0$, $\dot{v}_{\textup{d}} = \nabla_x^2\phi(x) v = 0$, and by also using the property $f(x,0) = 0$ we obtain $\lim_{t\to\infty}\widetilde{m}(t) = 0$ and $\lim_{t\to\infty}\dot{s}(t) = 0$.
		Note also that $\hat{\alpha}:[t_0,\infty)\to\mathbb{R}_{\geq 0}$ is a monotonically increasing function and it converges thus to some constant positive value $\hat{\alpha}^\star > 0$, since {$\hat{\alpha}(t_0) \geq 0$ and $\lim_{t\to\infty}\dot{\hat{\alpha}}(t) = \lim_{t\to\infty}\dot{\widetilde{\alpha}}(t) = 0$}.
		Therefore, we conclude that the system will converge to an equilibrium $s^\star \coloneqq [(x^\star)^\top, 0_n^\top, 0, \hat{\alpha}^\star]$ satisfying $\nabla_x \phi(x)|_{x^\star} = 0$.		
		Since $\lim_{t\to\infty} \nabla_x \phi(x)|_{x(t)} = \lim_{t\to\infty} v(t) = 0$, the system converges to the critical points of $\phi(x)$, i.e., we obtain from \eqref{eq:grad + hessian} that at steady state: 
		\begin{equation} \label{eq:grad at equil}
			2k_1 (x^\star - x_{\textup{d}}) = -k_2 \sum_{j\in\bar{J}} \beta'(d_j^\star)(x^\star-c_j),
		\end{equation}
		where $d_j^\star \coloneqq d_j(x^\star)$, $\forall j\in\bar{\mathcal{J}}$. 
		According to the choice of $\tau$ in Prop. \ref{prop:tau}, $x^\star=x_{\textup{d}}$ implies that $\beta'(d_j^\star) = 0$, $\forall j\in\bar{\mathcal{J}}$, and hence the desired equilibrium $x^\star=x_{\textup{d}}$ satisfies \eqref{eq:grad at equil}. Other \textit{undesired} critical points of $\phi(x)$ consist of cases where the two sides of \eqref{eq:grad at equil} cancel each other out. However, as already proved, only one $\beta'_j$ can be nonzero for each $x\in\mathcal{F}$. Hence, the undesired critical points satisfy one of the following expressions:  
		\begin{subequations} \label{eq:grad at equil k}	
		\begin{align} 
			k_1 (x^\star - x_{\textup{d}}) =& -k_2 \beta'(d_k^\star)(x^\star-c_k), \label{eq:grad at equil k in J}\\
			k_1 (x^\star - x_{\textup{d}}) =& k_2 \beta'(d_0^\star)x^\star, \label{eq:grad at equil k=0}
		\end{align}  
		\end{subequations}
		for some $k\in\bar{\mathcal{J}}$. In the case of \eqref{eq:grad at equil k=0}, $x^\star$ is collinear with the origin and $x_{\textup{d}}$. However, 	
		the choice of $\tau < \bar{r}_\mathcal{W}^2 - \|x_{\textup{d}}\|^2$ in Prop. \ref{prop:tau} implies that $$d_0^\star = \bar{r}^2_\mathcal{W} - \|x^\star\|^2 \leq \tau < \bar{r}^2_\mathcal{W} - \|x_\textup{d}\|^2 \Leftrightarrow \|x^\star\| \geq \|x_\textup{d}\|,$$
		 and hence $x^\star - x_{\textup{d}}$ and $x^\star$ have the same direction. Therefore, since $\beta'(d_j) < 0$, for $d_j < \tau$, $\forall j\in\bar{\mathcal{J}}$, \eqref{eq:grad at equil k=0} is not feasible.
		
		Moreover, in the case of \eqref{eq:grad at equil k in J}, since $\beta' \leq 0$, $x^\star - x_{\textup{d}}$ and $x^\star - c_k$ point to the same direction. Hence, the respective critical points $x^\star$ are on the $1$D line connecting $x_{\textup{d}}$ and $c_k$. Moreover, since $\tau < \bar{r}_{\textup{d}} \leq \|x_\textup{d} - c_k\|^2 - \bar{r}_{o_k}^2$, as chosen in Prop. \ref{prop:tau}, it holds that 
		\begin{align*}
			& d_k^\star = \|x^\star - c_k\|^2 - \bar{r}_{o_k}^2 < \|x_\textup{d} - c_k\|^2 - \bar{r}_{o_k}^2 \Leftrightarrow \\
			& \|x^\star - x_{\textup{d}}\| > \|x^\star - c_k\|.
 		\end{align*}		
		We proceed now by showing that the critical points satisfying \eqref{eq:grad at equil k in J} are saddle points, which have a lower dimension stable manifold. 
		Consider, therefore, the error $e_x = x - x^\star$, where $x^\star \neq x_{\textup{d}}$ represents the potential \textit{undesired} equilibrium point that satisfies \eqref{eq:grad at equil k in J}. {Let also $s_e \coloneqq [s_x^\top,\widetilde{\alpha}^\top]^\top$, where $s_x \coloneqq [e_x^\top, v^\top, \widetilde{m}]^\top$}, whose linearization around zero yields, after using \eqref{eq:cl system} and $\frac{\partial f(x,v)}{\partial x}\Big|_{v = 0_n} = 0_{n\times n}$, 
		\begin{equation} \label{eq:s_e dot}
			\dot{s}_e = \bar{A}_s s_e,
		\end{equation}	
		where
		\begin{align} 
			\bar{A}_s \coloneqq \begin{bmatrix}
			A_s & 0_{2n+2} \\ 
			0_{2n+2}^\top & 0
			\end{bmatrix}, 
			A_s \coloneqq \begin{bmatrix}
			0_{n\times n} & I_n & 0_n \\
			A_{s,21} & A_{s,22} & g \\			
			A_{s,31} & -k_m g^\top & 0 
			\end{bmatrix},
		\end{align}
		and 
		\begin{align*}
			A_{s,21} \coloneqq& -\frac{1}{m}\left(k_{\phi} + k_v + \hat{\alpha}^\star \right) \nabla_x^2\phi(x)\big|_{x^\star} \\
			A_{s,22} \coloneqq& -\nabla_x^2\phi(x)\big|_{x^\star} - (k_v + \hat{\alpha}^\star)I_n - \frac{\partial f(x,v)}{\partial v}\bigg|_{s^\star}.
		\end{align*}		
		We aim to prove that the equilibrium {$s_x^\star \coloneqq [0_n^\top,0_n^\top,0]^\top$} has at least one positive eigenvalue. To this end, consider a vector $\bar{\nu} \coloneqq [\mu \nu^\top, \nu^\top,0]^\top$	, where $\mu > 0$ is a positive constant, and $\nu \in\mathbb{R}^n$ is an orthogonal vector to $(x^\star - c_k)$, i.e. $\nu^\top(x^\star - c_k) = 0$. Then the respective quadratic form yields
		\begin{align*}
			\bar{\nu}^\top A_s \bar{\nu} = & \begin{bmatrix}
			\nu^\top A_{s,21} & \mu \nu^\top + \nu^\top A_{s,22} & \nu^\top g
			\end{bmatrix} \begin{bmatrix}
			\mu \nu \\ 
			\nu \\ 
			0
			\end{bmatrix} = \\
			& 
			\mu \nu^\top A_{s,21} \nu + \mu \|\nu\|^2 + \nu^\top A_{s,22} \nu,		
		\end{align*}
		which, after employing \eqref{eq:grad + hessian} with $\beta'(d_j^\star) = 0$, $\forall j\in\mathcal{J}\backslash\{k\}$ and $\nu^\top(x^\star - c_k) = 0$, becomes		
		\begin{align*}
			&\bar{\nu}^\top A_s \bar{\nu} = -\frac{2\mu k_1}{m}\left(k_{\phi} + k_v + {\frac{3}{2}}\hat{\alpha}^\star\right) \left(1 + \frac{k_2}{k_1} \beta'(d_k^\star) \right)\|\nu\|^2  \\
			&+ \mu\|\nu\|^2 - 2k_1 \left(1 + \frac{k_2}{k_1} \beta'(d_k^\star) \right)\|\nu\|^2 - \left(k_v + {\frac{3}{2}}\hat{\alpha}^\star\right)\|\nu\|^2 -\nu^\top \frac{\partial f(x,v)}{\partial v}\bigg|_{s^\star} \nu.
		\end{align*}		
		From \eqref{eq:grad at equil k in J}, by recalling that $\beta'(d_k) \leq 0$, we obtain that
		\begin{equation} \label{eq:at critical point}
			\frac{k_2}{k_1}\beta'(d_k^\star) = -\frac{\|x^\star - x_{\textup{d}}\|}{\|x^\star - c_k\|}  < -1.
		\end{equation}
		Therefore by defining $c^\star \coloneqq -\frac{k_2}{k_1}\beta'(d_k^\star) -1 > 0$, we obtain
		\begin{align*}
			\bar{\nu}^\top A_s \bar{\nu} =& \Bigg(\frac{2\mu k_1}{m}k_{\phi}c^\star  +  \left(\frac{2\mu k_1}{m}c^\star -1 \right)\left(k_v + {\frac{3}{2}}\hat{\alpha}^\star\right)   + \mu + 2k_1c^\star \Bigg)\|\nu\|^2 \notag \\ & -\nu^\top \frac{\partial f(x,v)}{\partial v}\bigg|_{s^\star} \nu,
		\end{align*}
		which is rendered positive by choosing a sufficiently large $\mu$. Hence, $A_s$ has at least one positive eigenvalue. Next, we prove that $A_s$ has no zero eigenvalues by proving that its determinant is nonzero. For the determinant of $\nabla^2_x\phi(x)|_{x^\star}$, in view of \eqref{eq:grad + hessian} that
		\begin{align*}
			\det(\nabla_x^2\phi(x)|_{x^\star}) =& \det \Bigg(2\left(k_1 + k_2\beta'(d^\star_k)\right)I_n +  
			2k_2 \beta''(d^\star_k) (x^\star-c_k)(x^\star-c_k)^\top \Bigg).
		\end{align*} 
		By using the property $\det(A + uv^\top) = (1+v^\top A^{-1}u) \det(A)$, for any invertible matrix $A$ and vectors $u,v$, we obtain 
		\begin{align} \label{eq:det hessian 1}
			\det(\nabla_x^2\phi(x)|_{x^\star}) =& 2^n \big(k_1 + k_2\beta'(d_k^\star) \big)^n \Bigg(1 + \notag\\
			&\hspace{-10mm}\frac{k_2}{k_1\left(1 + \frac{k_2}{k_1}\beta'(d_k^\star)\right)}\beta''(d_k^\star)\|x^\star-c_k\|^2 \Bigg).
		\end{align}
		In view of \eqref{eq:at critical point} and by using $\|x^\star -x_{\textup{d}}\| - \|x^\star - c_k\| = \|x_{\textup{d}} - c_k\|$ since $x^\star$, $c_k$ and $x_{\textup{d}}$ are collinear,  \eqref{eq:det hessian 1} becomes
		\begin{align*} 
		\det(\nabla_x^2\phi(x)|_{x^\star}) =& 2^n \big(k_1  k_2\beta'(d_k^\star)\big)^n \Bigg(1  
	-\frac{k_2}{k_1 \|x_{\textup{d}} - c_k\|}\beta''(d_k^\star)\|x^\star-c_k\|^3 \Bigg).
		\end{align*}
		Note that, since $\lim_{d_j \to 0}\beta(d_j) = \infty$ and $\beta(d_j)$ decreases to $\beta(d_j) = \beta(\tau)$, $\forall d_j \geq \tau$, the derivatives $\beta'(d_j)$ satisfy $\lim_{d_j\to 0}\beta'(d_j) = -\infty$ and increase to $\beta'(d_j) = 0$, $\forall d_j \geq \tau$. Hence, we conclude that $\beta''(d_j) > 0$, $\forall d_j \in (0,\tau)$. Therefore, in order for the critical point to be non-degenerate, we must guarantee that 
		\begin{equation} \label{eq:det hessian 3}
			\frac{k_2}{k_1 \|x_{\textup{d}} - c_k\|}\beta''(d_k^\star) \|x^\star-c_k\|^3 > 1.
		\end{equation}	
		By expressing $\|x^\star - c_k\|^3 = (d_k^\star + \bar{r}_{o_k}^2 )\sqrt{d_k^\star + \bar{r}_{o_k}^2}$, considering that $\|x_{\textup{d}} - c_k\| \leq 2\bar{r}_{\mathcal{W}}$ and setting $\underline{r} \coloneqq \min_{j\in\mathcal{J}}\{\bar{r}_{o_j}\}$, a lower bound for the left-hand side of \eqref{eq:det hessian 3} is 
		\begin{equation} \label{eq:det hessian 4}
			f_\ell(d_k^\star) \coloneqq \frac{k_2}{2k_1 \bar{r}_{\mathcal{W}} } \beta''(d_k^\star)(d_k(x^\star) + \underline{r}^2 )\sqrt{d_k(x^\star) + \underline{r}^2}.
		\end{equation}
		According to Property $4$ of Definition \ref{def:2nd nf}, \eqref{eq:det hessian 4} is a decreasing function of $d_k^\star$, for $d_k^\star \in (0,\tau)$, with $f_\ell(\tau) = 0$ and $\lim_{d_k^\star \to 0} f_\ell(d_k^\star) = \infty$. Therefore, there exists a positive $d_k^{\star\star} > 0$, such that $f_\ell(d_k^\star) > 1$, $\forall d_k^\star < d_k^{\star\star}$. Hence, by setting $\tau < d_k^{\star\star}$, we achieve $d_k^\star < \tau < d_k^{\star\star}$ and guarantee that $f_\ell(d_k^\star) > 1$.

		By defining $A_{2ns} \coloneqq \begin{bmatrix}
		0_{n\times n} & I_n \\A_{s,21} & A_{s,22}
		\end{bmatrix}$, it holds that		
		\begin{align*}
\det(A_{2ns}) = \det(A_{s,21}) = \frac{(-1)^n}{m^n}\left(k_{\phi} + k_v + {\frac{3}{2}}\hat{\alpha}^\star\right)^n \det(\nabla^2_x \phi(x)|_{x^\star}) \neq 0.
		\end{align*} 
		Moreover, it holds that $A_{2ns}^{-1} = \begin{bmatrix}
		\star & A_{s,21}^{-1} \\
		\star & 0_{n\times n}
		\end{bmatrix}$
		and therefore we obtain that		
		\begin{align*}
			\det(A_s) &=
			\det(A_{s,21})\begin{bmatrix}
			k_m g^\top(\nabla_x^2 \phi(x))^\top|_{x^\star} & k_m g^\top
			\end{bmatrix}  A_{2ns}^{-1} \begin{bmatrix}
			0_n \\ g 
			\end{bmatrix}  \\&= 
			\det(A_{s,21}) \begin{bmatrix}
			k_m g^\top(\nabla_x^2 \phi(x))^\top|_{x^\star} & k_m g^\top
			\end{bmatrix}  \begin{bmatrix}
			A_{s,21}^{-1} \ g \\ 0_n 
			\end{bmatrix}   \\
		 	&= \det(A_{s,21})
		 	k_m g^\top(\nabla_x^2 \phi(x))^\top|_{x^\star}
		 	A_{s,21}^{-1} \ g  \\
		 	&=
		 	k_m g^\top(\nabla_x^2 \phi(x))^\top|_{x^\star}
		 	\textup{adj}(A_{s,21}) \ g,		 	
		\end{align*}
		which is non-zero, since  $$\det(\nabla_x^2(\phi(x))^\top|_{x^\star}\textup{adj}(A_{s,21})) = \det(\nabla_x^2(\phi(x))^\top|_{x^\star}) \det(A_{s,21})^{n-1} \neq 0,$$ and $g \neq 0_n$\footnote{A similar analysis can be performed in the 2-dimensional, where $g=0_n$.}, and hence the matrix that forms the latter quadratic form is nonsingular.
				
		Therefore, we conclude that $A_s$ is non-degenerate and has at least one positive eigenvalue. Note that {$\bar{A}_s$ has the same eigenvalues as $A_s$ and an extra zero eigenvalue}. According to the Reduction Principle  \cite[Th. 5.2]{kuznetsov2013elements}, $\dot{s}_e = \bar{A}_s s_e$ is locally topologically equivalent near the origin to the system
		\begin{align*}			
			\dot{\hat{\alpha}} &= k_\alpha \big\| v_\alpha(\hat{\alpha}) + \nabla_x\phi(x)|_{x_\alpha(\hat{\alpha})}\big\|^2 \\
			\dot{s}_x &= A_s s_x,		
		\end{align*}
		where 
		$v_\alpha(\hat{\alpha})$, $\nabla_x\phi(x)|_{x_\alpha(\hat{\alpha})}$ are the restrictions of $v$ and $\nabla_x\phi(x)$ to the center manifold of $\hat{\alpha}$ {\cite[Theorem 5.2]{kuznetsov2013elements}}. Regarding the trajectories of $s_x$, since $A_s$ is a non-degenerate saddle (it has at least one positive eigenvalue)  its stable manifold has dimension lower than $2n+1$ and is thus a set of zero measure.
		Therefore, all the initial conditions $(x(t_0),v(t_0),\widetilde{m}(t_0)) \in \mathcal{F}\times\mathbb{R}^{n+1}$, except for the aforementioned lower-dimensional manifold, converge to the desired equilibrium $(x_{\textup{d}},0_n,0)$.		
	\end{proof}

\begin{remark}
	{Note that, unlike the related works in {feedback-based robot navigation}, the proposed algorithm guarantees almost global safe convergence while compensating for unknown dynamic terms ($f$ and $m$ in this case). Moreover, in contrast to tuning schemes (e.g.,  {\cite{koditschek1990robot,loizou2011closed,vlantis2018robot,dimarogonas2006feedback}}), we do not require large {control gains} in order to establish the correctness of the propose scheme. }	
\end{remark}

\begin{remark}
	The condition $k_\phi > \frac{\alpha}{2}$ of Theorem \ref{th:single robot} is only sufficient and not necessary, as will be shown in the simulation results. {{Moreover, in case the robot gets stuck in a local minima, one could apply an exciting input perpendicular to $x-x_\textup{d}$ (see \cite{vrohidis2018prescribed}), freeing it thus from that configuration. Nevertheless, the set of initial conditions that drive the robot to such configurations has zero measure and hence the probability of starting in it is zero.\footnote{{{The exciting input could be applied at the initial condition, if it can be identified that it will lead to a local minima.}}}}}
\end{remark}

{\subsection{Dynamic Disturbance Addition}} \label{subsec:disturbance}

{Except for the already considered dynamic uncertainties, we can add to the right-hand side of \eqref{eq:dynamics} an unknown disturbance vector $d(x,v,t)$, { i.e.,
	\begin{align*}
	& \dot{x}= v \\
	& m \dot{v} + f(x,v) + mg + d(x,v,t)= u,
	\end{align*}}
subject to a uniform boundedness condition $\|d(x,v,t)\|$ $\leq$ $\bar{d}$, $\forall x,v,t\in\mathbb{R}^{2n}\times\mathbb{R}_{\geq 0}$. In this case, by slightly modifying the control scheme, we still guarantee collision avoidance with the workspace obstacles and boundary. In addition, we 
achieve uniform ultimate boundedness of the error signals as well as the gradient of $\phi$, as the analysis in this section shows.}

{ 
	The control scheme of the previous section is appropriately enhanced to incorporate the $\sigma$- modification \cite{lavretsky2013robust}, a common technique in adaptive control. More specifically, the adaptation laws \eqref{eq:adaptation laws} are modified according to 
	\begin{align*}
		\dot{\hat{m}} \coloneqq& -k_m e_v^\top( \dot{v}_{\textup{d}}+g) - \sigma_m\hat{m}  \\
		\dot{\hat{\alpha}} \coloneqq & k_\alpha \|e_v\|^2 - \sigma_\alpha\hat{\alpha}, 
	\end{align*}
	where $\sigma_m$, $\sigma_\alpha$ are positive gain constants, {to be appropriately tuned as per the analysis below}. }

{
	Consider now the function $V$ as defined \eqref{eq:V Lyap}. In view of the analysis of the previous section, the incorporation of $d(x,v,t)$, as well as the modification of the adaptation laws, the derivative of $V$ becomes 
	\begin{align*}
		\dot{V} \leq& -(k_\phi - \tfrac{\alpha}{2})\|\nabla_x \phi(x)\|^2 - k_v\|e_v\|^2 + \|e_v\|\bar{d} \\
		&- {\tfrac{3}{2}}\sigma_\alpha \widetilde{\alpha}\hat{\alpha} - \sigma_m\widetilde{m}\hat{m},
	\end{align*} 
	which, by using $\hat{\alpha} = \widetilde{\alpha} + \alpha$, $\hat{m} = \widetilde{m} + m$, as well as the properties $-ab = -\frac{1}{2}(a+b)^2 + \frac{a^2}{2} + \frac{b^2}{2}$, $ab = -\frac{1}{2}(a-b)^2 + \frac{a^2}{2} + \frac{b^2}{2}$, $\forall a,b \in \mathbb{R}$, becomes

	\small
	\begin{align*}
		\dot{V} \leq&  -(k_\phi - \frac{\alpha}{2})\|\nabla_x \phi(x)\|^2 - (k_v-\frac{1}{2})\|e_v\|^2 + \frac{\bar{d}^2}{2}- {3\sigma_\alpha\frac{\widetilde{\alpha}^2}{4}} \\ & - \sigma_m\frac{\widetilde{m}^2}{2} + {3\sigma_\alpha\frac{\alpha^2}{4}} + \sigma_m\frac{m^2}{2} \leq -k_\xi \|\xi\|^2 + d_\xi, 		
	\end{align*}
	\normalsize
	where $\xi \coloneqq [\nabla_x\phi(x)^\top, e_v^\top, \widetilde{m}, \widetilde{\alpha}]^\top \in \mathbb{R}^{2n+2}$, $k_\xi \coloneqq \min\{k_\phi-\frac{\alpha}{2}, k_v -\frac{1}{2}, \frac{\sigma_m}{2},{\frac{3\sigma_\alpha}{4}}\}$, and
	$d_\xi \coloneqq \frac{\bar{d}^2}{2} + {3\sigma_\alpha\frac{\alpha^2}{4}} + \sigma_m\frac{m^2}{2}$. {Therefore, $\dot{V}$ is negative when $\|\xi\| > {\sqrt\frac{d_\xi}{k_\xi}}$, which, by also requiring $k_v > \frac{1}{2}$, implies that $$\|\xi(t)\| \leq \max\left\{ \gamma_1(\|\xi(t_0)\|), \gamma_2\left(\sqrt{\frac{d_\xi}{k_\xi}}\right) \right\},$$ $\forall t \geq t_0$, where $\gamma_i$ are class $K$ functions \cite[Th. 4.18]{khalil1996noninear}. Since $\xi(t)$, and hence $\nabla_x \phi(x)$, {remain} bounded, collisions are avoided.}
	
{
	Note that the aforementioned analysis guarantees that $\nabla_x\phi(x)|_{x(t)}$ will be ultimately bounded in a set close to zero. This point, however, might be a critical point of $\phi$ and it is not guaranteed that $x(t)$ will be bounded close to the goal configuration $x_\textup{d}$. Nevertheless, intuition suggests that if the disturbance vector $d(x,v,t)$ does not behave adversarially, the agent will converge close to the goal configuration. This is also verified by the simulation results of Section \ref{sec:sim}. 
 }  

\section{Extension to Star Worlds} \label{sec:Star}
In this section, we discuss how the proposed control scheme can be extended to generalized sphere worlds, and in particular \textit{star worlds}, {{being inspired by} the methodology of \cite{rimon1992exact}. That work however, like others related to workspace transformations \cite{loizou2014multi,vlantis2018robot}, consider simplified dynamics without taking into account unknown terms, which is the focus of this section. Although we focus on star-worlds, the analysis holds for any differeomorphic transformation that exhibits the desired properties (e.g. \cite{vlantis2018robot})}.  \textit{Star worlds} are diffeomorphic to sphere worlds sets of the form 
$\mathcal{T} \coloneqq \bar{\mathcal{W}} \backslash \bigcup_{j\in\mathcal{J}} {\bar{O}_{\mathcal{T}_j}}$,
where $\bar{\mathcal{W}}$ is a workspace of the form \eqref{eq:transf. workspace} and $\bar{O}_{\mathcal{T}_j}$ are $M$ disjoint star-shaped obstacles (indexed by $\mathcal{J} = \{1,\dots,M\}$). The latter are sets characterized by the  property that all rays emanating from a center point cross their boundary only once {\cite{rimon1992exact}}.
One can design a diffeomorphic mapping $H:\mathcal{T} \to \mathcal{F}$, where $\mathcal{F}$ is a sphere world of the type \eqref{eq:sphere world}. More specifically, $H$ maps the boundary of $\mathcal{T}$ to the boundary of $\mathcal{F}$. Construction of such a mapping is beyond the scope of the paper and we refer the interested reader to the related literature {\cite{rimon1992exact,rimon1991construction}}. 

The control scheme of the previous section is modified now to account for the transformation $H$ as follows. The desired robot velocity is set to $v_{\textup{d}}: \mathcal{T} \to \mathbb{R}^n$, with
\begin{equation} \label{eq:v_des star}
	v_{\textup{d}}(x) \coloneqq - J_H(x)^{-1} \nabla_{H(x)}\phi(H(x)),
\end{equation}
where $J_H(x) \coloneqq \frac{\partial H(x)}{\partial x}$ is the nonsingular Jacobian matrix of $H$. Next, by letting $e_v \coloneqq v - v_{\textup{d}}$, the control law is designed as $u:\mathcal{T}\times\mathbb{R}^{n+2} \to \mathbb{R}^n$, with
\begin{align} \label{eq:u star}
	u\coloneqq u(x,v,\hat{m},\hat{\alpha}) \coloneqq& -k_\phi J_h(x)^\top \nabla_{H(x)} \phi(H(x)) + \notag \\ 
	& \hat{m}(\dot{v}_{\textup{d}} + g) - \left(k_v + {\frac{3}{2}}\hat{\alpha}\right)e_v,
\end{align}
where $\hat{m}$ and $\hat{\alpha}$ evolve according to the respective expressions in \eqref{eq:adaptation laws}. The next theorem gives the main result of this section. 
\begin{theorem}\label{th:single robot star worlds}
	Consider a robot operating in $\mathcal{W}$, subject to the uncertain $2$nd-order dynamics \eqref{eq:dynamics}. Given $x_{\textup{d}} \in \mathcal{T}$, the control protocol {\eqref{eq:v_des star},  \eqref{eq:u star}, \eqref{eq:adaptation laws}}   guarantees the collision-free navigation to $x_{\textup{d}}$ from almost all initial conditions $(x(t_0),v(t_0),\hat{m}(t_0),\hat{\alpha}(t_0)) \in \mathcal{T}\times\mathbb{R}^{n_1}\times\mathbb{R}_{\geq 0}$, given a sufficiently small $\tau$ and that $k_\phi > \frac{\alpha}{2}$. Moreover, all closed loop signals remain bounded, $\forall t \geq t_0$.
\end{theorem}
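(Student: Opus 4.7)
I would reduce the problem to Theorem \ref{th:single robot} by working with the composite potential $\widetilde{\phi}(x) := \phi(H(x))$ on $\mathcal{T}$. Since $H$ is a diffeomorphism that maps $\partial\mathcal{T}$ to $\partial\mathcal{F}$ and $\phi$ blows up on $\partial\mathcal{F}$, the function $\widetilde{\phi}$ is proper on $\mathcal{T}$ and attains its unique global minimum at $x_{\textup{d}}$ (corresponding to the minimum of $\phi$ at $H(x_{\textup{d}})$); moreover, the critical set of $\widetilde{\phi}$ coincides with $H^{-1}$ applied to the critical set of $\phi$, since $J_H(x)$ is everywhere nonsingular. The Lyapunov candidate I would use is $V \coloneqq k_\phi\widetilde{\phi}(x)+\tfrac{m}{2}\|e_v\|^2+\tfrac{3}{4k_\alpha}\widetilde{\alpha}^2+\tfrac{1}{2k_m}\widetilde{m}^2$, in direct analogy with \eqref{eq:V Lyap}.

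\textbf{Lyapunov computation.} By the chain rule, $\dot{\widetilde{\phi}} = \nabla_{H(x)}\phi^\top J_H(x)v$. Splitting $v = e_v + v_{\textup{d}}$ and using \eqref{eq:v_des star} gives $J_H(x) v_{\textup{d}} = -\nabla_{H(x)}\phi$, so the $v_{\textup{d}}$ contribution yields $-k_\phi\|\nabla_{H(x)}\phi\|^2$ while the $e_v$ contribution produces the cross term $k_\phi (J_H(x)^\top\nabla_{H(x)}\phi)^\top e_v$, which is cancelled exactly by the $-k_\phi J_H(x)^\top\nabla_{H(x)}\phi$ piece of $u$ from \eqref{eq:u star} inside $e_v^\top u$. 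The adaptive cancellations involving $\widetilde{m}$ and $\widetilde{\alpha}$ go through verbatim from the proof of Theorem \ref{th:single robot}; the friction bound of Assumption \ref{ass:f} is handled by $\|v\|\leq\|e_v\|+\|v_{\textup{d}}\|\leq\|e_v\|+\|J_H(x)^{-1}\|\,\|\nabla_{H(x)}\phi\|$. Since $J_H$ is nonsingular on $\mathcal{T}$ and trajectories remain in the initial sublevel set of $V$, which is a compact subset of $\mathcal{T}$ by properness of $\widetilde{\phi}$, the quantity $\|J_H^{-1}\|$ is bounded over the region of interest, so for $k_\phi$ sufficiently large (the condition $k_\phi>\alpha/2$ being understood after absorbing this bound) one obtains $\dot V\leq -(k_\phi-\tfrac{\alpha}{2}\sup\|J_H^{-1}\|^2)\|\nabla_{H(x)}\phi\|^2 - k_v\|e_v\|^2\leq 0$. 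This gives boundedness of $\widetilde{\phi}(x(t))$, hence $x(t)\in\mathcal{T}$ for all $t\geq t_0$, and LaSalle together with Barbalat yield convergence of $(x,v,\widetilde{m})$ to the set where $\nabla_x\widetilde{\phi}(x)=0$ and $v=0$, exactly as in Theorem \ref{th:single robot}.

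\textbf{Critical-point analysis and main obstacle.} Because $J_H$ is nonsingular, $\nabla_x\widetilde{\phi}=J_H^\top\nabla_{H(x)}\phi$ vanishes iff $\nabla_{H(x)}\phi$ does, so the equilibria are $x_{\textup{d}}$ together with the $H$-preimages of the undesired saddles identified in Theorem \ref{th:single robot}. At any critical point $x^\star$, the condition $\nabla_{H(x^\star)}\phi=0$ collapses the change-of-variables formula for the Hessian to $\nabla_x^2\widetilde{\phi}(x^\star)=J_H(x^\star)^\top\nabla_y^2\phi(H(x^\star))J_H(x^\star)$, so by Sylvester's law of inertia $\nabla_x^2\widetilde{\phi}(x^\star)$ is congruent to $\nabla_y^2\phi(H(x^\star))$ and inherits its signature; in particular the undesired equilibria remain nondegenerate saddles of $\widetilde{\phi}$. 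The closed-loop Jacobian at each equilibrium therefore has the same block structure as the matrix $A_s$ in the proof of Theorem \ref{th:single robot}, with $\nabla_x^2\widetilde{\phi}$ playing the role of $\nabla_x^2\phi$, and the positive-eigenvalue and nonzero-determinant arguments carry through, after which the Reduction Principle applied to the center direction of $\hat{\alpha}$ again establishes a stable manifold of measure zero and almost-global convergence to $x_{\textup{d}}$. The main obstacle I anticipate is the careful bookkeeping of $J_H$ in the Lyapunov derivative and in the Hessian transformation; once the distortion $\sup\|J_H^{-1}\|^2$ is absorbed into the sufficient condition on $k_\phi$ and Sylvester's law is invoked for the Hessian signature, the rest of the argument is a faithful transcription of the proof of Theorem \ref{th:single robot}.
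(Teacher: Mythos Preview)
Your overall strategy---Lyapunov function $V=k_\phi\phi(H(x))+\frac{m}{2}\|e_v\|^2+\cdots$, LaSalle/Barbalat to reach the critical set of $\phi\circ H$, then a saddle-point argument at the undesired equilibria---is exactly the paper's route. Two points, however, deserve comment.

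\textbf{Lyapunov bound.} The paper asserts the clean inequality
\[
\dot V \le -\Big(k_\phi-\tfrac{\alpha}{2}\Big)\|\nabla_{H(x)}\phi\|^2 - k_v\|e_v\|^2
\]
directly, i.e.\ the same coefficient as in the sphere-world proof, and thus keeps the hypothesis $k_\phi>\alpha/2$ unchanged. Your computation inserts the factor $\sup\|J_H^{-1}\|^2$ and correspondingly strengthens the gain condition. This is a genuine difference from the paper's stated argument; the paper does not explain how the $J_H^{-1}$ in $v_{\textup d}$ disappears from the friction estimate, so your more cautious version is defensible, but it does not reproduce the theorem as stated.

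\textbf{The linearization claim is where your argument breaks.} You assert that the closed-loop Jacobian at an undesired equilibrium ``has the same block structure as $A_s$ in Theorem~\ref{th:single robot}, with $\nabla_x^2\widetilde\phi$ playing the role of $\nabla_x^2\phi$''. This is not true. The control law \eqref{eq:u star} contains $-k_\phi J_H^\top\nabla_{H(x)}\phi$, while $v_{\textup d}$ (and hence $e_v$ and $\dot v_{\textup d}$) contains $J_H^{-1}\nabla_{H(x)}\phi$. Linearizing both at a critical point gives, in the paper's notation,
\[
A_{s,21}=-\tfrac{1}{m}\big(k_\phi J_H(x^\star)^\top + (k_v+\tfrac{3}{2}\hat\alpha^\star)J_H(x^\star)^{-1}\big)\nabla^2\phi^\star\,J_H(x^\star),
\]
\[
A_{s,22}=-J_H(x^\star)^{-1}\nabla^2\phi^\star J_H(x^\star)-(k_v+\tfrac{3}{2}\hat\alpha^\star)I_n-\tfrac{\partial f}{\partial v}\Big|_{s^\star},
\]
which mixes $J_H^\top$ and $J_H^{-1}$ asymmetrically. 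This is \emph{not} obtained from the sphere-world $A_s$ by the single substitution $\nabla_x^2\phi\mapsto \nabla_x^2\widetilde\phi=J_H^\top\nabla^2\phi^\star J_H$, so Sylvester's law applied to $\nabla_x^2\widetilde\phi$ does not by itself transfer the positive-eigenvalue or nonzero-determinant conclusions to $A_s$. The paper instead carries out the quadratic-form computation explicitly with the test vector $\bar\nu=[\mu\nu^\top,\nu^\top,0]^\top$ where $\nu=J_H(x^\star)^{-1}\hat\nu$ and $\hat\nu\perp(H(x^\star)-H(c_k))$, and separately computes $\det(A_s)$ using the mixed $J_H^\top/J_H^{-1}$ structure. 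Your Sylvester observation about $\nabla_x^2\widetilde\phi$ is correct and is what the paper uses to conclude $\det(\nabla^2\phi^\star)\neq0$, but it is only one ingredient; the transfer to the full closed-loop matrix $A_s$ requires the explicit bookkeeping you tried to avoid.
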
  

\begin{proof}
Following similar steps as in the proof of Theorem \ref{th:single robot}, we consider the function 
\begin{equation}
V \coloneqq k_{\phi}\phi(H(x)) + \frac{m}{2}\|e_v\|^2 + \frac{1}{2k_\alpha}\widetilde{\alpha}^2 + \frac{1}{2k_m}\widetilde{m}^2,
\end{equation}	
whose derivative along the solutions of the closed loop system yields 
\begin{equation}
	\dot{V} \leq -\left(k_\phi - \frac{\alpha}{2}\right) \|\nabla_{H(x)} \phi(H(x)) \|^2 - k_v\|e_v\|^2 \leq 0,
\end{equation}
which proves the boundedness of the obstacle functions $\beta(d_j(H(x(t))))$, $\forall j\in\mathcal{J}, t\geq t_0$. Since the boundaries $\partial \bar{\mathcal{O}}_j$ are mapped to $\partial \bar{\mathcal{O}}_{\mathcal{T}_j}$ through $H(x)$, we conclude that $x(t) \in \mathcal{T}$, $t\geq t_0$ and no collisions occur. Next, by following similar arguments as in the proof of Theorem \ref{th:single robot}, we conclude that the solution will converge to a critical point of $\phi(H(x))$. By choosing a sufficiently small $\tau$ for the obstacle functions $\beta(d_j(H(x(t))))$, the critical points consist of the desired equilibrium, where $\beta'(d_j(H(x_{\textup{d}}))) = 0$, $\forall j\in\mathcal{J}$, or undesired critical points $x^\star$ satisfying 

\small
\begin{equation} \label{eq:critical point star world}
	k_1(H(x^\star) - H(x_{\textup{d}})) = -k_2 \beta'(d_{H_k}^\star)(H(x^\star) - H(c_k)),
\end{equation}
\normalsize
for some $k\in\mathcal{J}$, where we define $d_{H_j}^\star \coloneqq d_j(H(x^\star))$, $\forall j\in\mathcal{J}$. The respective terms of the linearization matrix $\bar{A}_s$ from \eqref{eq:s_e dot} become now	
\begin{align*}
\bar{A}_s \coloneqq& \begin{bmatrix}
A_s & 0_{2n+2} \\ 
0_{2n+2}^\top & 0
\end{bmatrix} \\
A_s \coloneqq& \begin{bmatrix}
0_{n\times n} & I_n & 0_n \\
A_{s,21} & A_{s,22} & g \\			
A_{s,31} & -k_m g^\top & 0 
\end{bmatrix},
\end{align*}
with 
\begin{align*}
A_{s,21} \coloneqq& -\frac{1}{m}\big( k_{\phi}J_H(x^\star)^\top + \left(k_v + {\frac{3}{2}}\hat{\alpha}^\star\right)J_H(x^\star)^{-1} \big) \nabla^2 \phi^\star J_H(x^\star) \\
A_{s,22} \coloneqq& -J_H(x^\star)^{-1} \nabla^2 \phi^\star J_H(x^\star) - \left(k_v + {\frac{3}{2}}\hat{\alpha}^\star\right)I_n - \frac{\partial f(x,v)}{\partial v}\bigg|_{s^\star}, \\ 
A_{s,31} \coloneqq & -k_m g^\top \big(J_H(x^\star)^{-1} \nabla^2 \phi^\star J_H(x^\star)\big)^\top
\end{align*}
and $\nabla^2 \phi^\star \coloneqq \nabla^2_{H(x)}\phi(H(x))|_{x^\star}$,
around $x = x^\star, v = 0, \widetilde{m} = 0, \widetilde{\alpha} = \widetilde{\alpha}^\star$. Next, similarly to the proof of Theorem \ref{th:single robot star worlds}, we prove that $\bar{\nu}^\top A_s \bar{\nu} > 0$, for $\bar{\nu} \coloneqq [\mu \nu^\top, \nu^\top, 0]^\top$, where $\mu >0$ is a positive constant and $\nu \coloneqq J_H(x^\star)^{-1}\hat{\nu}$, with
$\hat{\nu}\in\mathbb{R}^n$ a vector orthogonal to $(H(x^\star) - H(c_k))$.
The respective quadratic form yields, after employing \eqref{eq:critical point star world} and defining $c^\star \coloneqq -\left(1 +  \frac{k_2}{k_1}\beta'(d^\star_{H_k}) \right) > 0$:

\small
\begin{align*}
	\bar{\nu}^\top A_s \bar{\nu} =& 	\hat{\nu}^\top \Bigg[ \frac{2k_1k_{\phi}\mu c^\star}{m} I_n + J_H(x)^{-\top} \Bigg( \bigg(\frac{2k_1 c^\star(k_v + {\frac{3}{2}}\hat{\alpha}^\star)}{m}  + \\
	&\hspace{-10mm} \mu - \left(k_v + {\frac{3}{2}}\hat{\alpha}^\star\right) + 2k_1 c^\star \bigg) I_n - \frac{\partial f(x,v)}{\partial v } \bigg|_{s^\star}   \Bigg) J_H(x)^{-1} 
	\Bigg] \hat{\nu},
\end{align*}
\normalsize
which can be rendered positive for sufficiently large $\mu$.

Moreover, at a critical point $x^{\star,1}$ of $\phi(H(x))$, it holds that (see the proof of Prop. 2.6 in {\cite{koditschek1990robot}}),
\begin{equation*}
	\nabla^2_{H(x)} \phi(H(x))|_{x^{\star,1}} = J_H(x^{\star,1})^\top \nabla^2_x \phi(x) |_{x^{\star,2}} J_H(x^{\star,1}),
\end{equation*} 
where $x^{\star,2}$ is a critical point of $\phi(x)$ satisfying $x^{\star,2} = H(x^{\star,1})$. Since $J_H(x)$ is nonsingular, it holds that $x^{\star,1}$ is non-degenerate if and only if $x^{\star,2}$ is non-degenerate. As already shown in the proof of Theorem \ref{th:single robot}, by choosing $\tau$ sufficiently small, we render the critical points of $\phi(x)$ that are close to the obstacles non-degenerate. 
Hence, we conclude that the respective critical points of $\phi(H(x))$ are also non-degenerate and $\det(\nabla^2 \phi^\star) \neq 0$.

Next, in order to prove that the critical point $(x^\star,0,0)$ is non-degenerate, we calculate the determinant of $A_s$. Following the proof of Theorem \ref{th:single robot}, we obtain that 
\begin{align*}
	\det(A_s) &= \det(A_{s,21}) k_m g^\top \big(J_H(x^\star)^{-1} \nabla^2 \phi^\star J_H(x^\star)\big)^\top A_{s,21}^{-1}g \\
	&= k_m g^\top \big(J_H(x^\star)^{-1} \nabla^2 \phi^\star J_H(x^\star)\big)^\top \textup{adj}(A_{s,21})g
\end{align*}
where 
\begin{align*}
	\det(A_{s,21}) =&(-1)^n\Bigg( \frac{k^n_{\phi}}{m^n}\det\left(J_H(x^\star)\right) + \\
	& \hspace{-10mm} \left(k_v+{\frac{3}{2}}\hat{\alpha}^\star\right)^n \frac{1}{\det(J_H(x^\star))} \Bigg)\det(\nabla^2\phi^\star)\det(J_H(x^\star)),
\end{align*}
\normalsize
which is not zero, since $\det(\nabla^2 \phi^\star) \neq 0$ and $J_H(x^\star) \neq 0$. Hence, we conclude that the aforementioned quadratic form is also not zero and hence the non-degeneracy of the critical points under consideration. Hence, by following similar arguments as in the proof of Theorem \ref{th:single robot}, we conclude that the initial conditions that converge to these critical saddle points form a set of measure zero. 
\end{proof}
	 
\begin{remark}
	{The proposed schemes can {also be} extended to \textit{unknown} environments, where the amount and location of the  obstacles is unknown a priori, and these are sensed locally on-line. In particular, by having a large enough sensing neighborhood, each obstacle $j\in\mathcal{J}$ can be sensed when $d_j = \tau$, and hence the respective term can be smoothly incorporated in $\nabla_x \phi(x)$, in view of the properties of $\beta$ (a similar idea is discussed in Section V of \cite{paternain2017navigation}). It should be noted, however, that the local sensory information and respective hardware must allow for the accurate estimation of the centers and radii (or the implicit function in case of star-worlds) of the obstacles.}
\end{remark}

\section{Extension to Multi-Robot Systems} \label{sec:MAS}

This section is devoted to extending the results of Section \ref{sec:main} to {multi-robot} systems. Consider, therefore, $N\in\mathbb{N}$ spherical robots operating in a workspace $\mathcal{W}$ of the form \eqref{eq:workspace}, characterized by their position vectors $x_i\in\mathbb{R}^n$, as well as their radii $r_i > 0$, $i\in\mathcal{N} \coloneqq \{1,\dots,N\}$, and obeying the second-order uncertain dynamics \eqref{eq:dynamics}, i.e.,
\begin{subequations} \label{eq:dynamics MAS}
\begin{align} 
& \dot{x_i}= v_i \\
& m_i \dot{v_i} + f_i(x_i,v_i) + m_ig = u_i,
\end{align}
\end{subequations}
with {the unknown} $f_i(\cdot)$ satisfying $\|f_i(x_i,v_i)\| \leq \alpha_i\|v_i\|$, for unknown positive constants $\alpha_i$, $\forall i\in\mathcal{N}$. We also denote $x\coloneqq [x_1^\top,\dots,x_N^\top]^\top$, $v\coloneqq [v_1^\top,\dots,v_N^\top]^\top \in\mathbb{R}^{Nn}$. Each robot's destination is $x_{\textup{d}_i}$, $i\in\mathcal{N}$.

The proposed multi-robot scheme is based on a prioritized leader-follower coordination.  Prioritization in multi-agent systems for {navigation-type} objectives has been employed in \cite{roussos2013decentralized} and \cite{guo2016communication}, where KRNF gain tuning-type methodologies are developed. {The proposed framework, however, is substantially different from these works; \cite{guo2016communication} does not take into account inter-agent collisions, and uses prioritization for the sequential navigation and task satisfaction subject to connectivity constraints, {while} \cite{roussos2013decentralized} uses prioritization for directional collision-avoidance. In our proposed prioritized leader-follower methodology, the leader robot, by appropriately choosing the offset $\tau$, ``sees" the other robots as static obstacles and hence the overall scheme reduces to the one of Section \ref{sec:main}. This is accomplished by differentiating the free spaces of the leader and the followers. Moreover, the aforementioned works {\cite{guo2016communication,roussos2013decentralized}} consider simplified first-order dynamics and cannot be easily extended to the uncertain dynamics-case considered here. In fact, we note that, according to our best knowledge, there does not exist a control framework that provably guarantees decentralized safe multi-robot navigation in workspaces with obstacles and subject to uncertain $2$nd-order dynamics.}

The workspace is assumed to satisfy Assumption \ref{ass: workspace} and we further impose the following extra conditions:

\begin{assumption} \label{ass:MAS workspace}
	The workspace $\mathcal{W}$, obstacles $\mathcal{O}_j$, $j\in\mathcal{J}$, and destinations {$x_{\textup{d}_i}$, $i\in\mathcal{N}$,} satisfy:	
	\begin{align*}			
		&\|c_j - x_{\textup{d}_i}\| > r_{o_j} + r_i + 2r_M + \varepsilon, \forall i,j\in\mathcal{N}\times\mathcal{J}  \\
		&\|x_{\textup{d}_i} - x_{\textup{d}_j}\| > r_i + r_j + 2r_M + 2\varepsilon, \forall i,j\in\mathcal{N}, i\neq j \\		
		&r_\mathcal{W} - \|x_{\textup{d}_i}\| > r_i + 2r_M + \varepsilon, \forall i\in\mathcal{N}
	\end{align*}	
	whereas the initial positions satisfy:
	\begin{align*}
		&\|c_j - x_i(t_0)\| > r_{o_j} + r_i + 2r_M, \forall i,j\in\mathcal{N}\times\mathcal{J}  \\
		&r_\mathcal{W} - \|x_i(t_0)\| > r_i + 2r_M, \forall i\in\mathcal{N}  \\
		&\|x_{\textup{d}_i} - x_j(t_0) \| > r_i + r_j + 2r_M + \varepsilon, \forall i,j\in\mathcal{N}, i \neq j, 
	\end{align*}
	for an arbitrarily small positive constant  $\varepsilon$, $\forall i\in\mathcal{N}, j\in\mathcal{J}$, where $r_M \coloneqq \max_{i\in\mathcal{N}}\{r_i\}$. 
\end{assumption}

Loosely speaking, the aforementioned assumption states that the pairwise distances among obstacles, workspace boundary, initial conditions and final destinations are large enough so that one robot can always navigate between them. {Since} the convergence of the agents to the their destinations is asymptotic, we incorporate the threshold $\varepsilon$, which is the desired proximity we want to achieve to the destination, as will be clarified in the sequel. 
{Intuitively, since we cannot achieve $x_i = x_{\textup{d}_i}$ in finite time, the high-priority agents will stop once $\|x_i-x_{\textup{d}_i}\|= \varepsilon$, which is  included in the aforementioned conditions to guarantee the feasibility of the collision-free navigation for the lower-priority agents.}
Similarly to the single-agent case, we can find a positive constant $\bar{r}$ such that \eqref{eq:r_bar} hold as well as
\begin{subequations} \label{eq:r_bar_mas}
\begin{align}			
&\|c_j - x_i(t_0)\| > r_{o_j} + r_i + 2r_M + 2\bar{r}, \forall i,j\in\mathcal{N}\times\mathcal{J}  \\
&r_\mathcal{W} - \|x_i(t_0)\| > r_i + 2r_M + 2\bar{r}, \forall i\in\mathcal{N} \\
&\|c_j - x_{\textup{d}_i}\| > r_{o_j} + r_i + 2r_M + \varepsilon + 2\bar{r}, \forall i,j\in\mathcal{N}\times\mathcal{J}  \\
&\|x_{\textup{d}_i} - x_{\textup{d}_j}\| > r_i + r_j + 2r_M + 2\varepsilon + 2\bar{r}, \forall i,j\in\mathcal{N}, i\neq j \\
&\|x_{\textup{d}_i} - x_j(t_0) \| > r_i + r_j + 2r_M + \varepsilon + 2\bar{r}, \forall i,j\in\mathcal{N}, i \neq j \\
&r_\mathcal{W} - \|x_{\textup{d}_i}\| > r_i + 2r_M + \varepsilon + 2\bar{r}, \forall i\in\mathcal{N}
\end{align}
\end{subequations}
We consider that the agents have a limited sensing range, defined by a radius ${\varsigma_i} > 0$, $i\in\mathcal{N}$, and we assume that each agent $i$ can sense the state of its neighbors: 
\begin{assumption} \label{ass:sensing }
	Each agent $i\in\mathcal{N}$ has a limited sensing radius $\varsigma_i$, satisfying $\varsigma_i > \sqrt{\min(\bar{r}^2,\bar{r}_\textup{d})} + r_i + r_j +2r_M + 2\bar{r}$, with $\bar{r}_\textup{d}$ as defined in \eqref{eq:bar_r_d}, and has access to {$x_i-x_j,v_i-v_j$, $\forall j\in\{ j\in\mathcal{N} : \|x_i-x_j\| \leq \varsigma_i \}$}.
\end{assumption} 

Moreover, we consider {that} the destinations, $x_{\textup{d}_i}$, $i\in\mathcal{N}$, {as well as the radii, $r_i$}, are transmitted off-line to all the agents{\footnote{{This implies that the agents can compute $r_M$ offline.}}}.
Consider now a prioritization of the agents, possibly based on some desired metric (e.g., distance to their destinations), which can be performed off-line and transmitted to all the agents.
Our proposed scheme is based on the following algorithm. The agent with the highest priority is designated as the leader of the multi-agent system, indexed by $i_\mathcal{L}$, whereas the rest of the agents are considered as the followers, defined by the set $\mathcal{N}_\mathcal{F} \coloneqq \mathcal{N} \backslash \{i_\mathcal{L}\}$. The followers and leader employ a control protocol that has the same structure as the one of Section \ref{sec:main}. The key difference here lies in the definition of the free space for followers and leaders. Let $q = [q_1^\top,\dots,q_N^\top]^\top \in\mathbb{R}^{Nn}$. We define first the sets
\begin{align*}
	&\bar{\mathcal{W}}_{i_\mathcal{L}} \coloneqq \{q \in \mathbb{R}^{Nn}: \|q_{i_\mathcal{L}}\| < r_\mathcal{W} - r_{i_\mathcal{L}} \},  \\
	&\bar{\mathcal{O}}_{i_\mathcal{L},j} \coloneqq \{ q \in {\bar{\mathcal{W}}_{i_\mathcal{L}}} : \|q_i-c_j\| \leq r_{o_j} + r_i \}, \forall j\in\mathcal{J} \\
	& \mathcal{C}_{i_\mathcal{L}} \coloneqq \{ q \in {\bar{\mathcal{W}}_{i_\mathcal{L}}}: \|q_{i_\mathcal{L}} - q_j \| \leq r_{i_\mathcal{L}} + r_j, \forall j\in\mathcal{N} \backslash \{i_\mathcal{L}\}  \},  
\end{align*}
which correspond to the leader agent, as well as the follower sets
\begin{align*}
&\bar{\mathcal{W}}_i \coloneqq \{q \in \mathbb{R}^{Nn}: \|q_i\| < r_\mathcal{W} - r_i - 2r_M - 2\bar{r} \} \\
&\bar{\mathcal{O}}_{i,j} \coloneqq \{ q \in \bar{\mathcal{W}}_i : \|q_i-c_j\| \leq r_{o_j} + r_i + 2r_M + 2\bar{r} \}, \forall j\in\mathcal{J} \\
& \mathcal{C}_i \coloneqq \{ q \in \bar{\mathcal{W}}_i: \|q_i - q_{i_\mathcal{L}} \| \leq r_i + r_{i_\mathcal{L}},  \\
&\hspace{12mm}  \|q_i - q_j \| \leq r_i + r_j + 2r_M + 2\bar{r}, \forall j\in\mathcal{N} \backslash \{i_\mathcal{L}{,i}\}, \\
&\hspace{12mm}  \|q_i-x_{\textup{d}_j}\| \leq r_i + r_j + 2r_M + 2\bar{r} + \varepsilon, \forall {j\in\mathcal{N}_i}  \}, 
\end{align*}
$\forall i\in\mathcal{N}_\mathcal{F}$, {where $\mathcal{N}_i$ denotes the set of agents with higher priority than agent $i$}. The free space for the agents is defined then as $\mathcal{F}_i \coloneqq \bar{\mathcal{W}}_i \backslash \{ ( \bigcup_{j\in\mathcal{J}} \bar{\mathcal{O}}_{i,j} ) \cup  \mathcal{C}_i \}, \forall i\in\mathcal{N}$.
It can be verified that, in view of \eqref{eq:r_bar_mas}, the sets $\mathcal{F}_i$ are nonempty and  $x(t_0) \in \mathcal{F}\coloneqq \bigcap_{i\in\mathcal{N}}\mathcal{F}_i$. 
The main difference lies in the fact that the follower agents aim to keep a larger distance from each other, the obstacles, and the workspace boundary than the leader agent, and in particular, a distance enhanced by $2r_M + 2\bar{r}$. In that way, the leader agent will be able to choose an  appropriate constant $\tau$ (as in the single-agent case of Section \ref{sec:main}) so that  it is influenced at each time instant only by one of the obstacles/followers, and will be also able to navigate among the obstacles/followers. {Note that the followers are required to stay away also  from the destinations of the higher priority agents}, since a potential local {minimum} in such configurations can prevent the leader agent from reaching its goal. 
We provide next the mathematical details of the aforementioned reasoning.
  
Consider the leader distances $d_{i_\mathcal{L},o_k}$, $d_{i_\mathcal{L},j}$, $d_{i_\mathcal{L},o_0}:\mathcal{F}_{i_\mathcal{L}} \to \mathbb{R}_{\geq 0}$ as 

\begin{align*}
	& d_{i_\mathcal{L},o_k}(x) \coloneqq \|x_{i_\mathcal{L}} - c_k \|^2 - (r_{i_\mathcal{L}} + r_{o_k})^2, \forall k\in\mathcal{J} \\
	& d_{i_\mathcal{L},j}(x) \coloneqq \|x_{i_\mathcal{L}} - x_j \|^2 - (r_{i_\mathcal{L}} + r_j)^2, \forall j\in\mathcal{N}_\mathcal{F} \\
	& d_{i_\mathcal{L},o_0}(x) \coloneqq (r_\mathcal{W}+r_{i_\mathcal{L}})^2 -  \|x_{i_\mathcal{L}}\|^2
\end{align*}
and the follower distances $d_{i,o_k}$, $d_{i,i_\mathcal{L}}$, $d_{i,j}$, $d_{i,\textup{d}_j}$ $d_{i,o_0}:\mathcal{F}_i \to \mathbb{R}_{\geq 0}$ as 

\small
\begin{align*}
& d_{i,o_k}(x) \coloneqq \|x_i - c_k \|^2 - (r_i + r_{o_k} + 2r_M + 2\bar{r})^2, \forall k\in\mathcal{J} \\
& d_{i,i_\mathcal{L}}(x) \coloneqq \|x_i - x_{i_\mathcal{L}} \|^2 - (r_i + r_{i_\mathcal{L}})^2 = d_{i_\mathcal{L},i}(x) \\
& d_{i,j}(x) \coloneqq \|x_i - x_j \|^2 - (r_i + r_j + 2r_M + 2\bar{r})^2, \forall j\in\mathcal{N}_\mathcal{F}\backslash\{i\} \\
& d_{i,\textup{d}_j}(x) \coloneqq \|x_i - x_{\textup{d}_j} \|^2 - (r_i + r_j + 2r_M + 2\bar{r} + \varepsilon)^2,  \forall {j\in\mathcal{N}_i} \\
& d_{i,o_0}(x) \coloneqq (r_\mathcal{W} - r_i - 2r_M - 2\bar{r})^2 -  \|x_i\|^2,
\end{align*}
\normalsize
$\forall i\in\mathcal{N}_\mathcal{F}$. Note that $d_{i,j}(x) = d_{j,i}(x)$, $\forall i,j\in\mathcal{N}$, with $i \neq j$ and also that $x\in\mathcal{F}$ is equivalent to all the aforementioned distances being positive.

Let now functions $\beta$, $\beta_i$, $i\in\mathcal{N}$, that satisfy the properties of Definition \ref{def:2nd nf}, as well as the respective constants $\tau$, $\tau_i$, such that $\beta'({z})=\beta''({z}) =0$, $\forall {z}\geq \tau$, $\beta'_i({z}) = \beta''_i({z}) = 0$, $\forall {z} \geq \tau_i$, $i\in\mathcal{N}$. The $2$nd-order navigation functions for the agents {are} now defined as $\phi_i:\mathcal{F}_i \to \mathbb{R}_{\geq 0}$,  with
\begin{align*}
	\phi_i(x) &\coloneqq k_{1_i}\|x_i - x_{\textup{d}_i}\|^2 + k_{2_i}\bigg(b_{1_i}(x) + b_{2_i}(x) + k_{f_i}b_{3_i}(x)\bigg) \\
	b_{1_i}(x)&\coloneqq \sum_{j\in\bar{\mathcal{J}}} \beta_i( d_{i,o_j}(x) ) , \\
	b_{2_i}(x)&\coloneqq \sum_{j\in\mathcal{N}\backslash\{i\}}  \beta(d_{i,j}(x)) \\
	b_{3_i}(x)&\coloneqq \sum_{{j\in\mathcal{N}_i}}  \beta_i(d_{i,{\textup{d}_j}}(x)),
\end{align*}
$\forall i\in\mathcal{N}$, and $k_{f_{i_\mathcal{L}}} = 0$, $k_{f_i} = 1$, $\forall i\in\mathcal{N}_\mathcal{F}$. Note that the robotic agents can choose independently their $\tau_i$, $i\in\mathcal{N}$, that concerns the collision avoidance with the obstacles and the workspace boundary. The pair-wise inter-agent distances, however, are required to be the same and hence the same $\beta$ (and hence $\tau$) is chosen (see the terms $b_{2_i}(x)$ in $\phi_i(x)$), {which can, nevertheless, be done off-line.}
To achieve convergence of the leader to its destination, we choose $\tau$ and $\tau_{i_\mathcal{L}}$ as in Section \ref{sec:main}, i.e., $\tau, \tau_{i_\mathcal{L}} \in (0, \min\{\bar{r}^2,\bar{r}_\textup{d}\})$. 
Regarding the ability of the agents to sense each other when $d_{i,j}(x) < \tau$,  it holds that
\begin{align*}
	&d_{i,j}(x) < \tau \Leftrightarrow \|x_i - x_j\|^2 \leq \tau + (r_i + r_j + 2r_M + 2\bar{r})^2 \Rightarrow \\
	&\|x_i - x_j\| \leq \sqrt{\tau} + r_i + r_j + 2r_M + 2\bar{r} \Rightarrow \\
	&\|x_i - x_j\| \leq \sqrt{\min\{\bar{r}^2,\bar{r}_\textup{d}\}} + r_i + r_j + 2r_M + 2\bar{r} < \varsigma_i,
\end{align*}
$\forall i,j\in\mathcal{N}$, $i\neq j$, as dictated by Assumption \ref{ass:sensing }. 

The control protocol follows the same structure as the single-agent case presented in Section \ref{sec:main}. In particular, we define the reference velocities as $v_{\textup{d}_i}:\mathcal{F}_i\to\mathbb{R}^n$, with
\begin{equation} \label{eq:v_d MAS}
	v_{\textup{d}_i}(x) \coloneqq -\nabla_{x_i}\widetilde{\phi}_i(x),
\end{equation}
where $\widetilde{\phi}_i:\mathcal{F}_i\to\mathbb{R}_{\geq 0}$ is the slightly modified function:
\begin{align*}
\widetilde{\phi}_i(x) &\coloneqq k_{1_i}\|x_i - x_{\textup{d}_i}\|^2 + k_{2_i}(b_{1_i}(x) + 2b_{2_i}(x) + k_{f_i}b_{3_i}(x))
\end{align*}
{The need for modification of $\phi_i$ to $\widetilde{\phi}_i$ stems from the  differentiation of the terms $b_{2_i}$.}
The control law is designed as $u_i\coloneqq u_i(x,v,\hat{m}_i,\hat{\alpha}_i):\mathcal{F}_i\times\mathbb{R}^{Nn+2} \to \mathbb{R}^n$, with
\begin{align} \label{eq:u_MAS}
\hspace{-1mm}	u_i \coloneqq & -k_{\phi_i} \nabla_{x_i}\widetilde{\phi}_i(x) + \hat{m}_i(\dot{v}_{\textup{d}_i} + g) - \left(k_{v_i} + {\frac{3}{2}}\hat{\alpha}_i\right)e_{v_i}
\end{align}
$\forall i\in\mathcal{N}$; $k_{\phi_i}$, $k_{v_i}$ are positive constants, $e_{v_i}$ are the velocity errors $e_{v_i} \coloneqq v_i - v_{\textup{d}_i}$, and $\hat{m}_i$, $\hat{\alpha}_i$ denote the estimates of $m_i$ and $\alpha_i$, respectively, by agent $i$, {evolving according to \eqref{eq:adaptation laws}.}
We further denote $\hat{m} \coloneqq [\hat{m}_1,\dots,\hat{m}_N]^\top$, $\hat{\alpha} \coloneqq [\hat{\alpha}_1,\dots,\hat{\alpha}_N]^\top \in \mathbb{R}^{N}$.
%
The following theorem considers the convergence of a leader to its destination.

\begin{theorem}\label{th:MAS}
	Consider $N$ robots operating in $\mathcal{W}$, subject to the uncertain $2$nd-order dynamics \eqref{eq:dynamics MAS}, and a leader $i_\mathcal{L}$. 
	Under Assumptions \ref{ass:f}-\ref{ass:sensing }, the control protocol \eqref{eq:v_d MAS}, \eqref{eq:u_MAS}, \eqref{eq:adaptation laws} guarantees collision avoidance between the agents and the agents and obstacles/workspace boundary as well as convergence of $x_{i_\mathcal{L}}$ to $x_{\textup{d}_{i_\mathcal{L}}}$ from almost all initial conditions $(x(t_0),v(t_0),\hat{m}(t_0),\hat{\alpha}(t_0)) \in \mathcal{F}\times\mathbb{R}^{N(n+1)}\times\mathbb{R}^N_{\geq 0}$, given sufficiently small $\tau$, $\tau_{i_\mathcal{L}}$, and that $k_{\phi_i} > \frac{\alpha_i}{2}$, $i\in\mathcal{N}$. Moreover, all closed loop signals remain bounded, $\forall t \geq t_0$.
\end{theorem}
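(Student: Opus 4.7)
The plan is to mirror the single-agent proof of Theorem~\ref{th:single robot}, lifting it to the coupled $N$-agent system via a composite Lyapunov function, and then exploiting the asymmetric safety margins between leader and followers to reduce the leader's critical-point analysis to the one already carried out. I would start with
\begin{equation*}
V \coloneqq \sum_{i\in\mathcal{N}} \left( k_{\phi_i}\phi_i(x) + \tfrac{m_i}{2}\|e_{v_i}\|^2 + \tfrac{3}{4k_{\alpha_i}}\widetilde{\alpha}_i^2 + \tfrac{1}{2k_{m_i}}\widetilde{m}_i^2 \right),
\end{equation*}
and compute $\dot V$. Because $\phi_i$ depends on the full vector $x$, the derivative picks up cross-terms $\sum_{j\neq i}\nabla_{x_j}\phi_i^\top v_j$ from the symmetric pairwise sum $b_{2_i}$; using $d_{i,j}=d_{j,i}$ together with appropriate matching across $k_{\phi_i}k_{2_i}$, these cross-terms regroup so that each $v_j$ effectively multiplies $k_{\phi_j}\nabla_{x_j}\widetilde{\phi}_j$ rather than $k_{\phi_j}\nabla_{x_j}\phi_j$. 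This is precisely why the reference velocity is defined from $\widetilde{\phi}_i$ (with the factor $2$ on $b_{2_i}$). Substituting \eqref{eq:u_MAS} and the adaptation laws \eqref{eq:adaptation laws}, together with the bound $-e_{v_i}^\top f_i \leq \tfrac{\alpha_i}{2}\|\nabla_{x_i}\widetilde{\phi}_i\|^2 + \tfrac{3\alpha_i}{2}\|e_{v_i}\|^2$, then yields
\begin{equation*}
\dot V \leq -\sum_{i\in\mathcal{N}} \left( \left(k_{\phi_i}-\tfrac{\alpha_i}{2}\right)\|\nabla_{x_i}\widetilde{\phi}_i\|^2 + k_{v_i}\|e_{v_i}\|^2 \right) \leq 0.
\end{equation*}

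Non-increase of $V$ and finiteness of $V(t_0)$ (guaranteed by $x(t_0)\in\mathcal{F}$) bound every $\beta$-term in every $\phi_i$, thereby ruling out all agent-agent, agent-obstacle, and agent-workspace collisions for $t\geq t_0$. Boundedness of the entire state follows exactly as in Theorem~\ref{th:single robot}, and LaSalle's invariance principle together with a Barbalat argument to obtain $\lim_{t\to\infty}\dot v_i = 0$ forces every agent into the set $v_i = \dot v_i = 0$, $\widetilde{m}_i = 0$, $\nabla_{x_i}\widetilde{\phi}_i(x) = 0$. I would then focus on the leader's equation: since $k_{f_{i_\mathcal{L}}}=0$, $\widetilde{\phi}_{i_\mathcal{L}}$ contains only the destination term, $b_{1_{i_\mathcal{L}}}$, and $b_{2_{i_\mathcal{L}}}$, in which the other agents appear exactly as static circular obstacles of nominal radius. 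By choosing $\tau_{i_\mathcal{L}}\in(0,\min\{\bar{r}^2,\bar{r}_\textup{d}\})$ and invoking the geometric contradiction of Proposition~\ref{prop:tau} with the enlarged margins \eqref{eq:r_bar_mas}, the leader is close to at most one obstacle/follower at a time, so its critical-point equation reduces to one of the forms \eqref{eq:grad at equil k=0}--\eqref{eq:grad at equil k in J}, with the (stationary) follower position $x_{j^\star}^\star$ playing the role of $c_k$ in the latter.

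The last step is the saddle-point analysis for the leader. I would linearize around an undesired equilibrium $(x^\star,0,0,\hat{\alpha}^\star)$ and extract the leader block $(x_{i_\mathcal{L}},v_{i_\mathcal{L}},\widetilde{m}_{i_\mathcal{L}},\widetilde{\alpha}_{i_\mathcal{L}})$. At such an equilibrium only one pairwise term of $b_{2_{i_\mathcal{L}}}$ is active, so the block has precisely the structure of $A_s$ in the single-agent proof: the quadratic-form test against $\bar{\nu}=[\mu\nu^\top,\nu^\top,0]^\top$ with $\nu$ orthogonal to $(x_{i_\mathcal{L}}^\star-c_{k^\star})$ (or $x_{i_\mathcal{L}}^\star-x_{j^\star}^\star$) again produces a positive eigenvalue, and the determinant computation of Theorem~\ref{th:single robot} still yields a non-vanishing determinant by the same choice of $\tau_{i_\mathcal{L}}$ and property~$4$ of Definition~\ref{def:2nd nf}. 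The Reduction Principle~\cite[Th.~5.2]{kuznetsov2013elements} then confines the set of initial conditions converging to such equilibria to a proper submanifold of measure zero within the full $(2Nn+2N)$-dimensional state space, giving the ``almost all'' claim.

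The main obstacle I anticipate is rigorously handling the cross-coupling between the leader block and the follower $j^\star$ against which the leader is ``stuck'': at that configuration $\nabla_{x_{j^\star}x_{i_\mathcal{L}}}^2\widetilde{\phi}_{i_\mathcal{L}}$ is nonzero, so the full closed-loop Jacobian is not block-diagonal in $(i_\mathcal{L},j^\star)$. One must argue either (i) that the positive eigenvalue identified in the leader block persists under this off-diagonal perturbation --- feasible because the follower's own Jacobian block is separately non-degenerate near $x_{j^\star}^\star$ and the perturbation enters through a rank-one correction tied to the single active $\beta$-term --- or (ii) that the quadratic-form test may be applied directly to the full Jacobian with a test vector supported on the leader block, exploiting that $\widetilde{\phi}_i$ for the other followers $i\neq j^\star$ is independent of $x_{i_\mathcal{L}}$. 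A secondary subtlety, already anticipated by the theorem statement, is that the followers need not converge to their destinations; the enlarged $2r_M+2\bar{r}+\varepsilon$ margins encoded in $d_{i,\textup{d}_j}$ ensure that a ``stuck'' follower still cannot obstruct the leader's final approach to $x_{\textup{d}_{i_\mathcal{L}}}$.
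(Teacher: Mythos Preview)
Your Lyapunov construction, the handling of the symmetric $b_{2_i}$ terms via the $\widetilde{\phi}_i$ modification, the resulting inequality $\dot V\leq 0$, and the LaSalle/Barbalat step are exactly the paper's argument; there is nothing to correct there.

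The difference is in the final ``almost all'' step. The paper does \emph{not} redo the linearization or extract a leader block from the coupled Jacobian. Instead, after establishing $\lim_{t\to\infty}v_i=\lim_{t\to\infty}\dot v_i=0$ for every $i$ and verifying that the followers' limiting positions satisfy the spacing inequalities \eqref{eq:d_j static obs}, it simply declares that the followers ``resemble static obstacles'' and invokes Theorem~\ref{th:single robot} verbatim for the leader, with the choice $\tau,\tau_{i_\mathcal{L}}\in(0,\min\{\bar r^2,\bar r_\textup{d}\})$. In other words, the paper treats the reduction as a one-line corollary of the single-agent result applied to the $\omega$-limit configuration, rather than as a fresh spectral computation on the full $N$-agent linearization. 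Your more careful route---linearizing the coupled system and worrying about the off-diagonal block $\nabla^2_{x_{j^\star}x_{i_\mathcal{L}}}\widetilde{\phi}_{i_\mathcal{L}}$---is more rigorous, and the cross-coupling issue you flag is real: the paper's reduction argument is informal on precisely this point and does not address it. What the paper's shortcut buys is brevity; what your approach buys is an honest accounting of where the measure-zero claim actually lives in the full state space. If you want to match the paper you can simply cite Theorem~\ref{th:single robot} after the reduction; if you want to close the gap the paper leaves open, your option~(ii)---applying the quadratic-form test to the full Jacobian with a test vector supported on the leader coordinates---is the cleaner path, since the only nonzero cross-block at the equilibrium couples $i_\mathcal{L}$ with the single active neighbor $j^\star$ and the test vector $\bar\nu$ you chose is orthogonal to that coupling direction.
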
 
\begin{proof} 
%
%
	We prove first the avoidance of collisions by considering the function
	\begin{equation*}
	V \coloneqq \sum_{i\in\mathcal{N}}\bigg\{k_{\phi_i}\phi_i + \frac{m_i}{2}\|e_{v_i}\|^2 + {\frac{3}{4k_{\alpha_i}}\widetilde{\alpha}_i^2} + \frac{1}{2k_{m_i}}\widetilde{m}_i^2 \bigg\}.
	\end{equation*} 
Since $x(t_0) \in \mathcal{F}$, $V(t_0)$ is bounded. 
Differentiation of $V$ yields, after using $\sum_{i\in\mathcal{N}}\sum_{j\in\mathcal{N}\backslash\{i\}}(x_i-x_j)^\top(v_i - v_j) = 2\sum_{i\in\mathcal{N}}\sum_{j\in\mathcal{N}\backslash\{i\}}(x_i-x_j)^\top v_i$, \eqref{eq:u_MAS}, \eqref{eq:adaptation laws}, and proceeding like in the proof of Theorem \ref{th:single robot}, yields
\begin{align*}
	\dot{V} =& \sum_{i\in\mathcal{N}}\Bigg\{ 2k_{\phi_i}k_{1_i}(x_i-x_{\textup{d}_i}) - 2k_{\phi_i}k_{2_i}\bigg(\beta_i'(d_{i,o_0})x - \sum_{k\in\mathcal{J}}\beta_i'(d_{i,o_k})(x_i - c_k)\\
  &\hspace{-5mm} - 2\sum_{j\in\mathcal{N}\backslash\{i\}}\beta'(d_{i,j})(x_i - x_j) - \sum_{j\in\mathcal{N}\backslash\{i\}} k_{f_i} \beta'_i(d_{i_{\textup{d}_j}})(x_i - x_{\textup{d}_j}) \bigg)^\top v_i + \\
	&\hspace{-6mm}e_{v_i}^\top(u_i - f_i(x_i,v_i) - m_ig - m_i\dot{v}_{\textup{d}_i} ) + \frac{1}{2k_{\alpha_i}}\widetilde{\alpha}_i\dot{\hat{\alpha}}_i + \frac{1}{2k_{m_i}}\widetilde{m}_i\dot{\hat{m}}_i \Bigg\}  \\
	\leq & \sum_{i\in\mathcal{N}}\bigg\{ k_{\phi_i} \nabla_{x_i}\widetilde{\phi}_i(x)^\top v_i + e_{v_i}^\top(u_i -m_i(g + \dot{v}_{\textup{d}_i}) ) + \\
	& \alpha_i\|e_{v_i}\|\|v_i\| + \widetilde{\alpha}_i\|e_{v_i}\|^2 - \widetilde{m}_ie_{v_i}^\top(\dot{v}_{\textup{d}_i} + g)  \bigg\},
\end{align*}
which, by using $v_i = e_{v_i} + v_{\textup{d}_i}$ and substituting the control and adaptation laws \eqref{eq:u_MAS},\eqref{eq:adaptation laws}, becomes
\begin{align*}
	\dot{V} \leq -\sum_{i\in\mathcal{N}}\bigg\{ \left(k_{\phi_i}-\frac{\alpha_i}{2}\right)\|\nabla_{x_i}\widetilde{\phi}_i(x)\|^2 + k_{v_i}\|e_{v_i}\|^2 \bigg\} \leq  0,
\end{align*}
and hence, $V(t) \leq V(t_0)$, which implies the boundedness of all {closed-loop} signals as well as that collisions between the agents and the agents and obstacles/workspace boundary are avoided $\forall t \geq t_0$. Moreover, following similar arguments as in the proof of Theorem \ref{th:single robot}, we conclude that $\lim_{t\to\infty}\|\nabla_{x_i}\widetilde{\phi}_i(x(t))\| = \lim_{t\to\infty}\|e_{v_i}(t)\| = \lim_{t\to\infty}\|v_i(t)\| = \lim_{t\to\infty}\|\dot{v}_i(t)\| =0$, $\forall i\in\mathcal{N}$. For the followers $\mathcal{N}_\mathcal{F}$, depending on the choice of $\tau_i$, $i\in\mathcal{N}_\mathcal{F}$, the critical point $\nabla_{x_i}\widetilde{\phi}_i(x(t)) = 0$ might either correspond to their destination $x_{\textup{d}_i}$ or a local minimum. In any case, it holds that $x(t)\in\mathcal{F}$, $\forall t \geq t_0$, and hence, for all the followers $i\in\mathcal{N}_\mathcal{F}$,
\begin{subequations} \label{eq:d_j static obs}
\begin{align}
	& \|x_i(t) - c_k\| > r_i + r_{o_k} + 2r_M + 2\bar{r},\forall k\in\mathcal{J} \label{eq:d_j static obs 1}\\
	&\|x_i(t) - x_j(t)\| > r_i + r_j + 2r_M + 2\bar{r}, 
	\forall j\in\mathcal{N}_\mathcal{F}\backslash\{i\} \label{eq:d_j static obs 2}\\
	& r_\mathcal{W} - \|x_i\|  > r_i + 2r_M + 2\bar{r}, \label{eq:d_j static obs 3}\\		
	& \|x_i(t) - x_{\textup{d}_j} \| > r_i + r_j + 2r_M + 2\bar{r} +\varepsilon, 
	 \forall {j\in\mathcal{N}_i} \label{eq:d_j static obs 4}
\end{align}
\end{subequations}
$\forall t > t_0$. Hence, since $\lim_{t\to\infty}\|v_i(t)\|$ $=$ $\lim_{t\to\infty}\|\dot{v}_i(t)\|$ $= 0$, $\forall i\in\mathcal{N}$, the multi-robot case reduces to the single-robot case of Section \ref{sec:main}, where the followers resemble static obstacles. Note that the obstacle constraints \eqref{eq:r_bar} are always satisfied by the followers (see \eqref{eq:d_j static obs 1}-\eqref{eq:d_j static obs 3}); \eqref{eq:d_j static obs 4} implies that the configuration that corresponds to the leader destination, i.e., $[x_1^\top$, $\dots$, $x_{i_\mathcal{L}-1}^\top$, $x_{\textup{d}_{i_\mathcal{L}}}^\top$, $x_{i_\mathcal{L}+1}^\top,\dots,x_N^\top]^\top$, belongs always in its free space $\mathcal{F}_{i_\mathcal{L}}$.
Hence, by choosing sufficiently small $\tau, \tau_{i_\mathcal{L}}$ in the interval $(0,\min(\bar{r}^2,\bar{r}_\textup{d}))$, with $\bar{r}_\textup{d}$ as defined in \eqref{eq:bar_r_d}, we guarantee the safe navigation of $x_{i_\mathcal{L}}$ to $x_{\textup{d}_{i_\mathcal{L}}}$ from almost all initial conditions, as in Section \ref{sec:main}.
\end{proof}
{When the current leader $i_\mathcal{L}$ reaches $\varepsilon$-close to its goal, at a time instant $t_{i_{\mathcal{L}}}$\footnote{Note that the proven asymptotic stability of Theorem \ref{th:MAS} guarantees that this will occur in finite time.}, it broadcasts this information to the other agents, switches off its control and remains immobilized, considered hence as a static obstacle with center $c_{M+1}\coloneqq x_{i_\mathcal{L}}(t_{i_{\mathcal{L}}})$ and radius $r_{M+1}$  by the rest of the team. Note that $\|c_{M+1} - x_{\textup{d}_{i_\mathcal{L}}}\| \leq \varepsilon$ and hence, in view of \eqref{eq:r_bar_mas}, $\|c_j - c_{M+1}\| > r_{o_j} + r_{i_\mathcal{L}} + 2r_M + 2\bar{r}$, $\forall j\in\mathcal{J}$, and $r_\mathcal{W} - \|c_{M+1}\| > r_{i_\mathcal{L}}+2r_M + 2\bar{r}$, satisfying the obstacle spacing properties \eqref{eq:r_bar}. 	
The next agent $i'_{\mathcal{L}}\in \widetilde{\mathcal{N}}\coloneqq \mathcal{N}\backslash \{i_{\mathcal{L}}\}$  in priority is then assigned as a leader for navigation, and we redefine the sets 
\begin{align*}
&\widetilde{\bar{\mathcal{O}}}_{i'_\mathcal{L},j} \coloneqq \{ q \in {\bar{\mathcal{W}}_{i'_\mathcal{L}}} : \|q_i-c_j\| \leq r_{o_j} + r_i \}, \forall j\in\widetilde{\mathcal{J}}\\
& \widetilde{\mathcal{C}}_{i'_\mathcal{L}} \coloneqq \{ q \in {\bar{\mathcal{W}}_{i'_\mathcal{L}}}: \|q_{i'_\mathcal{L}} - q_j \| \leq r_{i'_\mathcal{L}} + r_j, \forall j\in \widetilde{\mathcal{N}} \backslash \{i'_\mathcal{L}\}  \},  \\
&\widetilde{\bar{\mathcal{O}}}_{i,j} \coloneqq \{ q \in \bar{\mathcal{W}}_i : \|q_i-c_j\| \leq r_{o_j} + r_i + 2r_M + 2\bar{r} \}, \forall j\in\widetilde{\mathcal{J}} \\
&\widetilde{\mathcal{C}}_i \coloneqq \{ q \in \bar{\mathcal{W}}_i: \|q_i - q_{i'_\mathcal{L}} \| \leq r_i + r_{i'_\mathcal{L}},  \\
&\hspace{12mm}  \|q_i - q_j \| \leq r_i + r_j + 2r_M + 2\bar{r}, \forall j\in\widetilde{\mathcal{N}} \backslash \{i'_\mathcal{L}{,i}\}, \\
&\hspace{12mm}  \|q_i-x_{\textup{d}_j}\| \leq r_i + r_j + 2r_M + 2\bar{r} + \varepsilon, \forall j\in \widetilde{\mathcal{N}}_i  \},  
\end{align*}
$\forall i\in\widetilde{\mathcal{N}}\backslash\{i'_\mathcal{L}\}$, where $\widetilde{\mathcal{N}}_i\coloneqq \mathcal{N}_i\backslash\{i_{\mathcal{L}}\}$, and $\widetilde{\mathcal{J}} \coloneqq \mathcal{J}\cup\{M+1\}$, to account for the new obstacle $M+1$. The new free space is  $\widetilde{\mathcal{F}}_i \coloneqq \bar{\mathcal{W}}_i \backslash \{ ( \bigcup_{j\in\widetilde{\mathcal{J}}} \widetilde{\bar{\mathcal{O}}}_{i,j} ) \cup  \widetilde{\mathcal{C}}_i \}, \forall i\in\widetilde{\mathcal{N}}$ and, in view of \eqref{eq:d_j static obs}, one can conclude that $x_{i'_{\mathcal{L}}}(t_{i_{\mathcal{L}}}) \in \widetilde{\mathcal{F}}_{i'_{\mathcal{L}}}$, $x_i(t_{i_{\mathcal{L}}}) \in \widetilde{\mathcal{F}}_i$ $\forall i\in\widetilde{\mathcal{N}}\backslash\{i'_{\mathcal{L}}\}$. Therefore, the application of Theorem \ref{th:MAS} with $t_{i_{\mathcal{L}}}$ as $t_0$ and agent $i'_{\mathcal{L}}$ as leader guarantees its navigation $\varepsilon$-close to $x_{\textup{d}_{i'_{\mathcal{L}}}}$. Applying iteratively the aforementioned reasoning, we guarantee the successful navigation of all the agents {to their destinations}.}

\section{Simulation Results} \label{sec:sim}

This section verifies the theoretical findings of Sections \ref{sec:main}-\ref{sec:MAS} via computer simulations. 
We consider first a $2$D workspace on the horizontal plane with {$r_\mathcal{W} = 11$, populated with $M = 60$} randomly placed obstacles, whose radius, enlarged by the robot radius, is {randomly chosen in $\bar{r}_{o_j} \in [0.25,0.75]$}, $\forall j\in\mathcal{J}$. 
The mass, is chosen as $m=1$, and $f(x,v) = \frac{\alpha}{16} \sin(0.5(x_1 + x_2))F(v)v$, with $F(v) = \textup{diag}\{ [\exp(-\textup{sgn}(v_i)v_i) + 1]_{i\in\{1,2\}} \}$,
and $\alpha=10$, where we denote $(x_1,x_2) = x$, $(v_1,v_2) = v$. {Note that $f()$ is highly nonlinear, motivated by the friction model of \cite{de1995new}.}
We choose the goal position as $x_\textup{d} = (5,5)$, which the robot aims to converge to from $3$ different initial positions, namely $x(0) = -(5,5), (-7,3.5)$, and $(3.5,-7)$. The parameter $\bar{r}$ is chosen as $\bar{r} = 0.5$. We choose {a variation of \eqref{eq:beta_exps}} for $\beta$ with $\tau = \bar{r}^2$. The control gains are chosen as $k_1 = 0.04$, $k_2=5$, $k_v = 20$, $k_\phi = 1$, and $k_m = k_\alpha = 0.01$. The results for $t\in[0,100]$ seconds are depicted in Figs. \ref{fig:2d_traj_1_2_3}, \ref{fig:2d_u_1_2_3}; \ref{fig:2d_traj_1_2_3} (left) shows that the robot navigates to its destination without any collisions, and \ref{fig:2d_u_1_2_3} depicts the input and adaptation signals $u(t)$, $\hat{\alpha}(t)$, $\hat{m}(t)$ {for the trajectory starting from $(-5,5)$}. In addition, note that the fact that $\alpha > 2$ does not affect the performance of the proposed control protocol and hence we can verify that the condition $k_\phi > \frac{\alpha}{2}$ is only sufficient and not necessary. {Moreover, in order to verify the results of Section \ref{subsec:disturbance}, we add a bounded time-varying disturbance vector $d(x,v,t) = d(t) \coloneqq 2\left[\sin(0.5t+\frac{pi}{3}), \cos(0.4t-\frac{\pi}{4})\right]^\top \in \mathbb{R}^2$ and we choose the extra control gains as $\sigma_m = \sigma_\alpha = 0.1$. The results are depicted in Fig. \ref{fig:2d_traj_1_2_3} (right), which shows the safe navigation of the agent to a set close to $x_\textup{d}$, and Fig. \ref{fig:2d_u_1_2_3}, which shows the input and adaptation signals {for the trajectory starting from $(-5,5)$}.}

\begin{figure}[!ht]
	\centering
	\includegraphics[trim = 0cm 0cm 0cm -0cm,width = 0.85\textwidth]{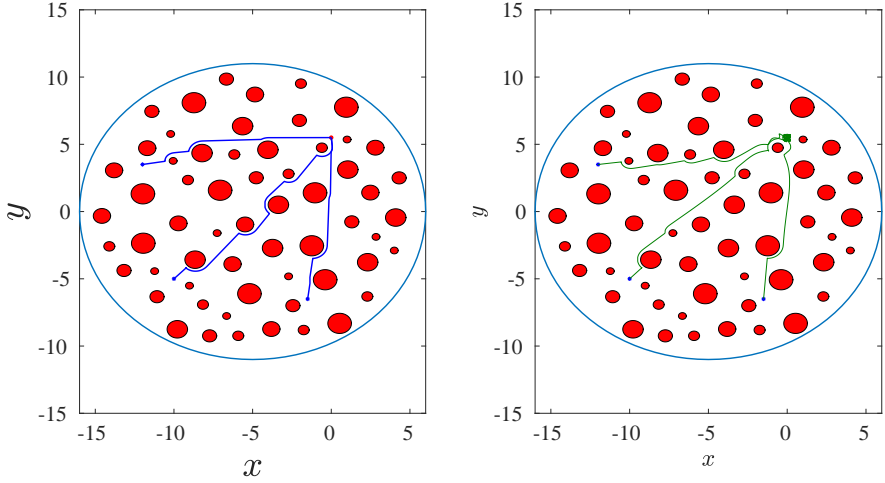}\\
	\caption{The resulting trajectories $x(t)$, $t\in[0,100]$ seconds, from the initial points $-(5,5), {(-7,3.5)}$, and $(3.5,-7)$ to the destination $(5,5)$. Left: without any disturbances. Right: with bounded disturbance $d(x,v,t)$. }\label{fig:2d_traj_1_2_3}
\end{figure}

\begin{figure}[!ht]
	\centering
	\includegraphics[trim = 0cm 0cm 0cm -0cm,width = 0.85\textwidth]{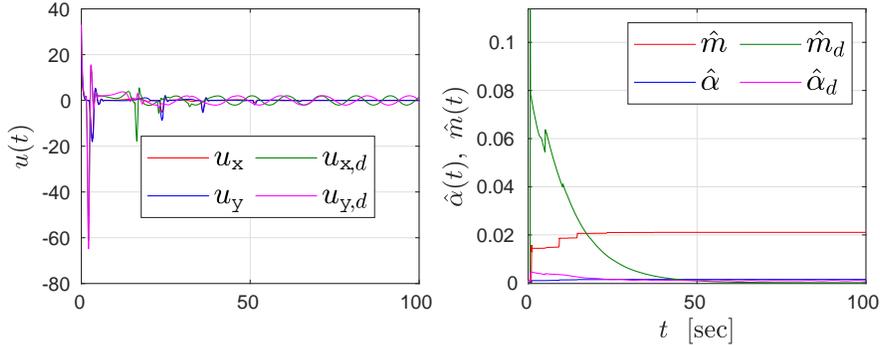}\\
	\caption{The input $u(t)=(u_1(t),u_2(t))$ (left), and adaptation signals $\hat{\alpha}(t)$, $\hat{m}(t)$ (right) for the $2$D trajectory {from $-(5,5)$ to $(5,5)$} of Fig. \ref{fig:2d_traj_1_2_3}. The subscript $d$ corresponds to the model where a bounded disturbance vector $d(x,v,t)$ was included. }\label{fig:2d_u_1_2_3}
\end{figure}

Next, we consider a $3$D workspace with {$r_\mathcal{W} = 11$, populated with $M = 200$ randomly placed obstacles, whose radius, enlarged by the robot radius, is randomly chosen in $\bar{r}_{o_j} \in [0.25,0.75]$}, $\forall j\in\mathcal{J}$; $f(x,v)$ and $m$ as well as the $\beta$ functions and control gains are chosen as in the $2$D scenario. 
We choose the goal position as $x_\textup{d} = (4,4,4)$, which the robot aims to converge to from $3$ different initial positions, namely $x(0) = -(4,4,4), (-4,4,-4)$, and $(-4,-4,4)$. The parameter $\bar{r}$ is chosen as $\bar{r} = 0.75$. The robot navigation as well as the input and adaptation signals $u(t)$, $\hat{\alpha}(t)$, $\hat{m}(t)$ {(for the trajectory starting from $-(4,4,4)$)} are depicted in Figs. \ref{fig:3d_traj_1_2_3}, and \ref{fig:3d_u_1_2_3} for $t\in[0,100]$ seconds. Note that the robot navigates to its destination without any collisions and that $\hat{m}$ converges to $m$, as predicted by the theoretical results.

\begin{figure}[!ht]
	\centering
	\includegraphics[trim = 0cm 0cm 0cm -0cm,width = 0.85\textwidth]{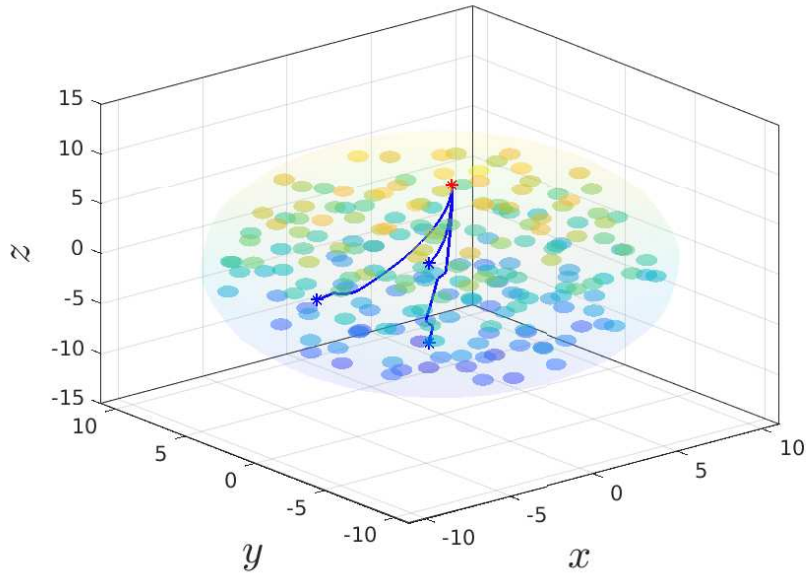}\\
	\caption{The resulting trajectories $x(t)$, $t\in[0,100]$ seconds, from the initial points $-(4,4,4), (-4,-4,4)$, and $(-4,4,-4)$ to the destination $(4,4,4)$.}\label{fig:3d_traj_1_2_3}
\end{figure}

\begin{figure}[!ht]
	\centering
	\includegraphics[trim = .5cm 0cm 0cm -0cm,width = 0.85\textwidth]{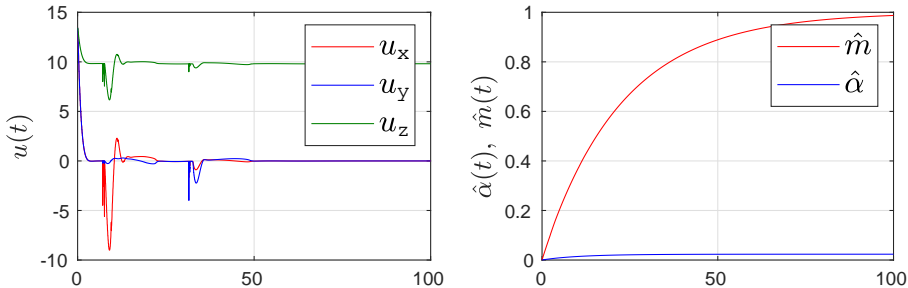}\\
	\caption{The input $u(t)=(u_1(t),u_2(t))$ (left), and adaptation signals $\hat{\alpha}(t)$, $\hat{m}(t)$ (right) for the $3$D trajectory from $-(4,4,4)$ to $(4,4,4)$ of Fig. \ref{fig:3d_traj_1_2_3}.}\label{fig:3d_u_1_2_3}
\end{figure}
%


{
Next, we illustrate the performance of the control protocol of Section \ref{sec:Star} in a $2$D star-world. We consider a 
workspace with $r_\mathcal{W} = 8$, which contains $2$ star-shaped obstacles, centered at $(-3,-3)$ and $(0,1)$, respectively. The mass $m$ and function $f(x,v)$ are {given} as before, with $\alpha = 1$.  
In order to transform the workspace to a sphere world, we employ the transformation proposed in \cite{rimon1992exact}. In the transformed sphere world, we choose  $\bar{r}=4$ and $\bar{r}_{o_j}=0.5$, whereas the function $\beta$ is chosen as in the sphere-world case. The initial and goal position are selected as $x(0) = (-5,-5)$ and $x_\textup{d} = (3,4)$, respectively, and the control gains as $k_1 = 0.04$, $k_2=.2$, $k_v = 20$, $k_\phi = 1$, and $k_m = k_\alpha = 0.01$. The robot trajectory is depicted in Fig. \ref{fig:2d_stars_traj}, for $t\in[0,500]$ seconds, both in the original star and in the transformed sphere world.}
%

{
Finally, we use the control scheme of Section \ref{sec:MAS} in a multi-agent scenario. We consider $20$ agents in a $2$D workspace of $r_\mathcal{W} = 120$, populated with $70$ obstacles. The agents and obstacles are randomly initialized to satisfy the conditions of the free space of Section \ref{sec:MAS}, {as shown in Fig. \ref{fig:mas_initial}}. The radius of the agents and the obstacles is chosen as $r_i=r_{o_j} = 2$, $\forall i\in\mathcal{N},j\in\mathcal{J}$, and the sensing radius of the agents is taken as $\varsigma_i = 20$, $\forall i\in\mathcal{N}$. The functions $\beta$, $\beta_i$ are chosen as before, and we also choose $\bar{r} = 4$, $\varepsilon = 0.1$. The results are depicted in Fig. \ref{fig:gamma+dis MAS} for $870$ seconds, which shows the convergence of the distance errors $\|x_i(t) - x_{\textup{d}-i}\|$ to zero, $\forall i\in\mathcal{N}$ as well as the minimum of the distances $\|x_i(t) - x_j(t)\| - 2r$, $\forall i,j\in\mathcal{N}$, $i\neq j$, and $\|x_i(t) - c_j\| - 2r$, $\forall i\in\mathcal{N}, j\in\mathcal{J}$, defined as $\beta_{\min}(t) \coloneqq \min \{\min_{i,j\in\mathcal{N},i\neq j}\{\|x_i(t)-x_j(t)\|-2r \}, 
\min_{(i,j)\in\mathcal{N}\times\mathcal{J}}\big\{ \|x_i(t)-c_j\|-2r \} \}$,
which stays strictly positive, $\forall t\in[0,870]$, implying that collisions are avoided. {A video illustrating the multi-agent navigation can be found in \href{https://vimeo.com/393443782}{https://vimeo.com/393443782}}.}

\begin{figure}[!ht]
	\centering
		\includegraphics[trim = 0cm 0cm 0cm -0cm,width = 0.85\textwidth]{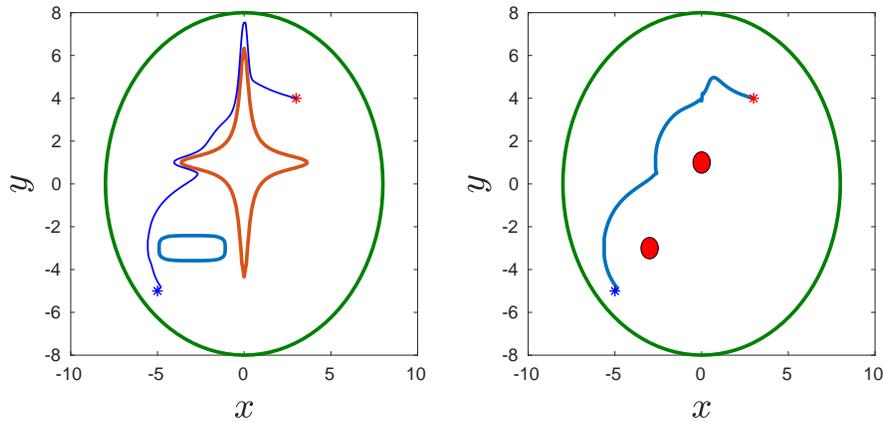}\\
	\caption{The resulting trajectory $x(t)$, $t\in[0,500]$ seconds, from the initial points $-(5,5)$ to the destination $(3,4)$, in the $2$D star world workspace (left) and the transformed sphere world (right).}\label{fig:2d_stars_traj}
\end{figure}


%


\begin{figure}[!ht]
	\centering
	\includegraphics[trim = 0cm 0cm 0cm -0cm,width = 0.85\textwidth]{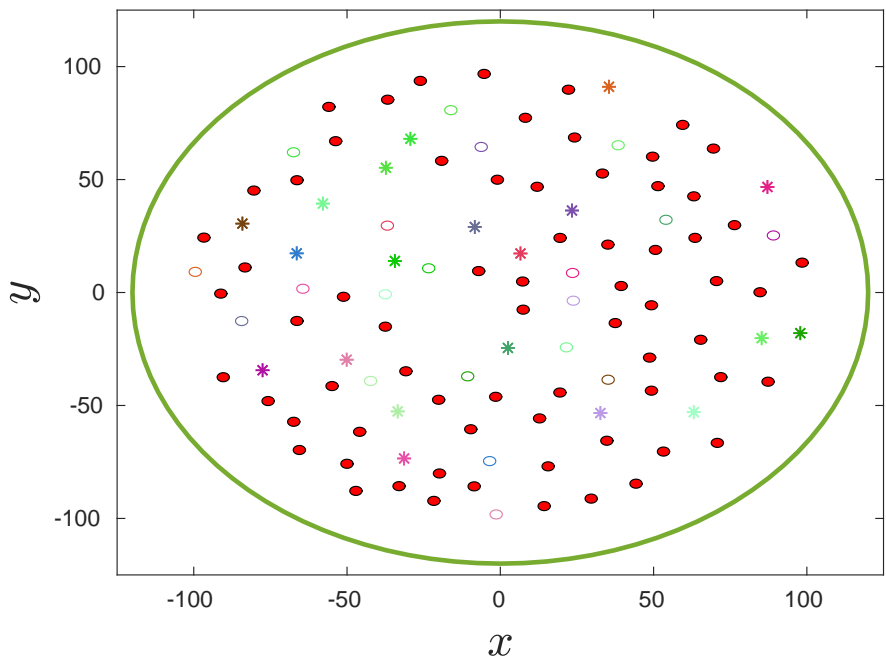}\\
	\caption{The initial configurations of the multi-agent scenario. The obstacles  are depicted as filled red disks whereas the agents as circles. The destinations are shown with asterisk.}\label{fig:mas_initial}
\end{figure}

\begin{figure}[!ht]
	\centering
	\includegraphics[trim = 0cm 0cm 0cm -0cm,width = 0.7\textwidth]{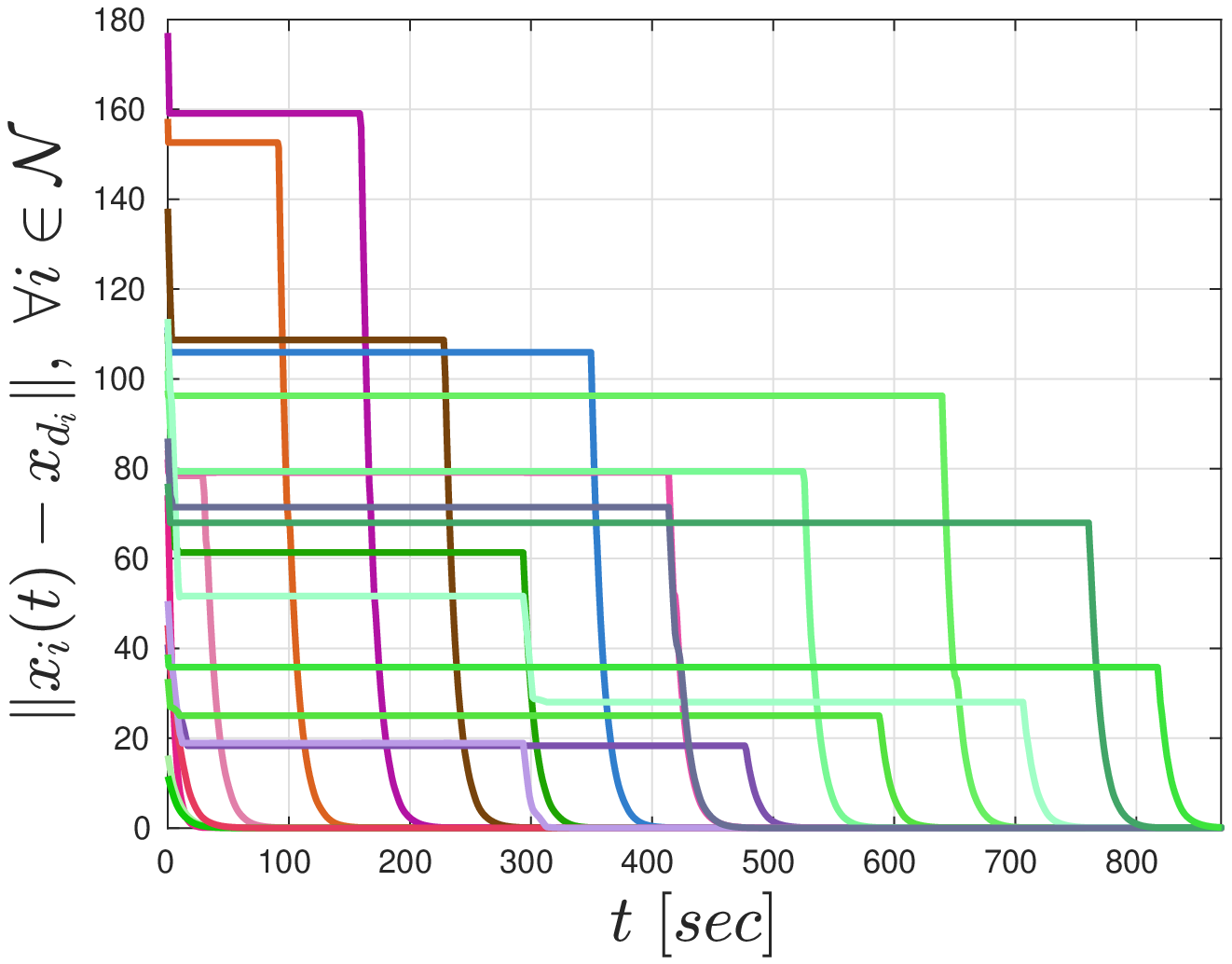}
	\includegraphics[trim = 0cm 0cm 0cm -0cm,width = 0.8\textwidth]{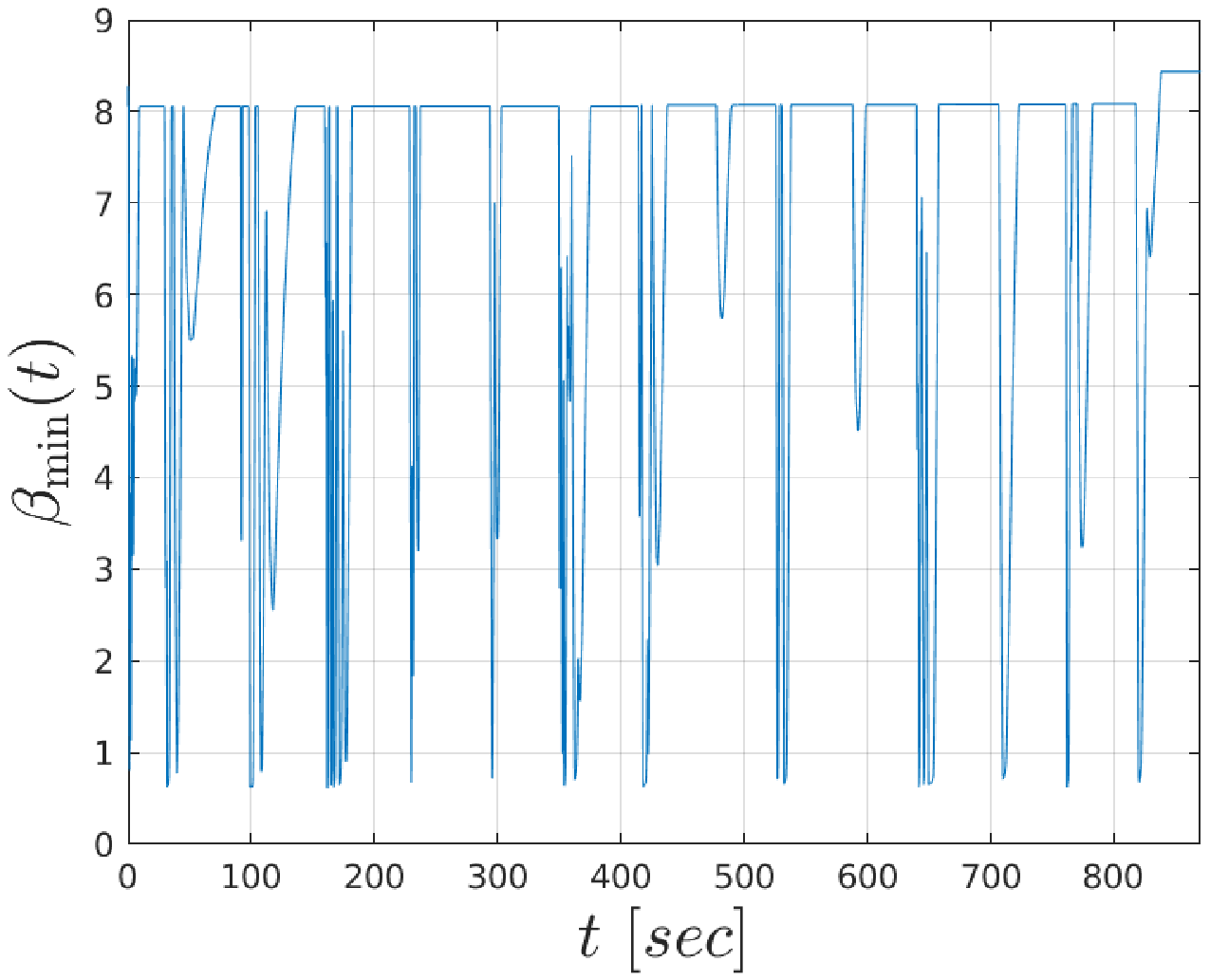}
	\caption{The resulting signals $\|x_i(t) - x_{\textup{d}_i}\|$, $\forall i\in\mathcal{N}$ (left) and the signal $\beta_{\min}(t)$ (right). }\label{fig:gamma+dis MAS}
\end{figure}

%


\section{Conclusion and Future Work} \label{sec:Concl}
{This paper considers the robot navigation in an obstacle-cluttered environment subject to uncertain $2$nd-order dynamics. A novel navigation function is proposed and combined with adaptation laws that compensate for the uncertain dynamics. 
The results are extended to star worlds as well as multi-agent cases. 
Future directions will aim at relaxing the assumptions for the latter.}
	
\bibliographystyle{unsrt}        
\bibliography{bibl}

\end{document}